\documentclass{article}



 \usepackage[nonatbib,preprint]{neurips_2020}


\usepackage[numbers]{natbib}

\usepackage{hyperref}       

\usepackage{math}
\usepackage{amsmath,amssymb,amsthm,mathrsfs,amsfonts,dsfont}
\usepackage{times}
\usepackage{graphicx} 
\usepackage{subcaption}
\usepackage{algorithm}
\usepackage{algorithmic}
\usepackage{tikz}
\usetikzlibrary{arrows,calc,shapes,intersections}
\usepackage{xcolor}
\usepackage{mathtools}
\usepackage{paralist}
\usepackage{multirow}
\usepackage{thmtools,thm-restate}
\usepackage{array}
\usepackage{caption}
\usepackage{enumitem}
\usepackage[page,header]{appendix}

%
%
%
%
%
%
%

%

\newcommand{\A}{\mathcal{A}}
\newtheorem{theorem}{Theorem}
\newtheorem{lemma}[theorem]{Lemma}

\renewcommand{\S}{\mathcal{S}}

\newcommand{\interior}[1]{%
  {\kern0pt#1}^{\mathrm{o}}%
}
\definecolor{C0}{HTML}{1F77B4}
\definecolor{C1}{HTML}{FF7F0E}
\definecolor{C2}{HTML}{2ca02c}
\definecolor{C3}{HTML}{d62728}
\definecolor{C4}{HTML}{9467bd}
\definecolor{C5}{HTML}{8c564b}
\definecolor{C6}{HTML}{e377c2}
\definecolor{C7}{HTML}{7F7F7F}
\definecolor{C8}{HTML}{bcbd22}
\definecolor{C9}{HTML}{17BECF}

%

\author{%
  Yun-Shiuan Chuang \footnote[1]{Equal Contribution} \\
  Department of Computer Science and Psychology\\
  University of Wisconsin - Madison\\
  Madison, WI 53706 \\
\texttt{yunshiuan.chuang@wisc.edu} \\
    \And
    Xuezhou Zhang \footnote[1]{Equal Contribution} \\
  Department of Computer Science\\
  University of Wisconsin - Madison\\
  Madison, WI 53706 \\
\texttt{xzhang784@wisc.edu} \\
\And
    Yuzhe Ma  \\
  Department of Computer Science\\
  University of Wisconsin - Madison\\
  Madison, WI 53706 \\
\texttt{ma234@wisc.edu} \\
\AND
    Mark K. Ho \\
    Department of Computer Science and Psychology \\
    Princeton University\\
    Princeton, NJ 08540 \\
    \texttt{mho@princeton.edu} \\
\And
    Joseph L. Austerweil \footnote[2]{Co-Senior Author} \\
  Department of Psychology and Computer Science\\
  University of Wisconsin - Madison\\
  Madison, WI 53706 \\
\texttt{austerweil@wisc.edu} \\
\And
    Xiaojin Zhu \footnote[2]{Co-Senior Author} \\
  Department of Computer Science\\
  University of Wisconsin - Madison\\
  Madison, WI 53706 \\
\texttt{jerryzhu@cs.wisc.edu} \\
}
    \title{Using Machine Teaching to Investigate Human Assumptions when Teaching Reinforcement Learners}

\begin{document}
    \maketitle
    \begin{abstract} 
Successful teaching requires an assumption of how the learner learns - how the learner uses experiences from the world to update their internal states. We investigate what expectations people have about a learner when they teach them in an online manner using rewards and punishment. We focus on a common reinforcement learning method, Q-learning, and examine what assumptions people have using a behavioral experiment. To do so, we first establish a normative standard, by formulating the problem as a machine teaching optimization problem. To solve the machine teaching optimization problem, we use a deep learning approximation method which simulates learners in the environment and learns to predict how feedback affects the learner's internal states. What do people assume about a learner's learning and discount rates when they teach them an idealized exploration-exploitation task? In a behavioral experiment, we find that people can teach the task to Q-learners in a relatively efficient and effective manner when the learner uses a small value for its discounting rate and a large value for its learning rate. However, they still are suboptimal. We also find that providing people with real-time updates of how possible feedback would affect the Q-learner's internal states weakly helps them teach. Our results reveal how people teach using evaluative feedback and provide guidance for how engineers should design machine agents in a manner that is intuitive for people.
\end{abstract}

    \section{Introduction}
    People regularly teach other agents (e.g., children, pets, machines) in their environment using evaluative feedback (rewards and punishment). For example, Andrew is teaching his six-year-old daughter Jane to forage for wild berries. To do so, he first brings her to a bush with edible berries. What does he do next? If his goal is purely for her to eat some wild berries, then he has achieved his goal. However, to teach her to forage in a robust manner, he must provide her rewards to incentivize her to leave that bush and seek new ones. How do people teach agents using rewards and punishments, and how does their teaching depend on their knowledge of the internal dynamics of how the learner updates their beliefs?

        Although not always presented from this perspective, foraging is an example of the exploration-exploitation problem within reinforcement learning: How should an agent balance exploiting rewards based on their current knowledge while still exploring for berries is an example of an exploration-exploitation problem, which has been studied extensively for humans, animals, and idealized agents. Many natural agents (adults, children, animals, and other living creatures; \cite{cohen2007should,gopnik2020childhood,reid2016,stephens2007}) stop exploiting rewards at a state to explore, providing an implicit or explicit punishment to the agent, in a manner that is optimal within their ecological niche. However, often the mechanisms used to learn and change their behavior are tuned to assumptions of their environment, and do not adjust well to those in other environments. Although foraging itself (and learning while foraging) has been extensively studied within reinforcement learning and other mathematical frameworks, to the best of our knowledge, teaching others to forage is an open question. In this paper, we explore this question for teaching Q-learning agents a full policy in a task that requires teaching sub-optimal actions so that the learner learns what they should do in states they otherwise would not encounter (because they start close to their ultimate goal).

   Teaching the correct actions to take in a domain while exploring the domain is a social task involving the interaction of the teacher, a learner, and the environment. Researchers have examined how people teach others, formalized this process, and created automated methods to teach. One unifying computational framework across these areas is the Bayesian pedagogy framework, where the learner and teacher are Bayesian agents that assume both know the teacher is providing information to help the learner \citep{shafto2014}. However, this framework is mechanism-agnostic and assumes humans are doing ideal Bayesian updates, which can be a problematic assumption. Instead, we take the perspective of machine teaching, where we examine human teaching from the perspective of optimized teaching for a {\em particular learning mechanism} -- Q-learning algorithms.

    Q-learning is a family of model-free reinforcement learning algorithms that is known to learn the optimal policy as the agent interacts with the MDP 
\cite{sutton2018reinforcement}
-- undisturbed by a teacher.
Instead, we allow a human teacher to intervene in the agent-MDP loop, specifically by changing the reward signal, with the hope to teach the optimal policy faster. Concretely, we design multiple Q-learning agents (students) that differ in a number of critical parameters. If human teachers can teach some agents better than others, it means that those agents' parameters are closer to human teachers' assumption about how students learn.

    \section{Prior Work}
\label{sec:prior-work}

    From children to adults, when asked to teach, people provide different information than if they are simply asked to convey some information to another learner. For social situations that involve goals and rewards, assuming people are optimal agents within a Markov Decision Process captures human behavior well. For example, when asked to show how to do a task, people will take actions that are strictly unnecessary for completing the task, but convey information to a learner. However, if they are asked to do a task, they only do the necessary actions for completing the task ~\citep{ho2016}.

    Recent work in cognitive science and human-machine interaction has explored human teaching strategies and to what extent they are optimal. For example, work in Bayesian pedagogy~\citep{shafto2014} has shown that when teaching a range of simple concepts by example, people are capable of teaching other people near optimally. Similar research on linguistic pragmatics~\citep{grice1975,goodman2016} demonstrates that people's use of language reflects an intention to be optimally informative. Moreover, these findings on human teaching have been shown to generalize to more complex settings, such as sequential decision making~\citep{ho2016}. However, more complex settings also make it more likely that human teaching and learning processes are misaligned~\citep{ho2015,ho2019}, which motivates research into learning algorithms tailored to human teaching strategies~\citep{knox2009,macglashan2017}.

        Although the Bayesian pedagogy and social cognition literature provide useful perspectives on how people ought to teach others who know they are being taught, it does so for an idealized Bayesian agent assumed to be maximizing environmentally provided rewards. Albeit a useful assumption for examining human teaching, people do not teach or learn as an idealized Bayesian agent. In the 1990s, researchers learned how difficult it can be to train a reinforcement learner to complete simple tasks that contained necessary conditions which need to be completed by the learner completes the last steps of their task which are closer in state space to their current location \citep[e.g.,][]{ng1999}. Based on this intuition, recent work demonstrated that people fail to teach simple model-free and model-based reinforcement learners how to get from a start state to an end state while staying on a trail in a $3 \times 3$ Grid World \citep{ho2019}. The learners often pick up on ``positive net reward cycles" which enable them to get arbitrarily large reward while not completing the task.

\textbf{Computational Teaching:} Since computational teaching was first proposed in~\cite{shinohara1991teachability}, the teaching has been studied in various learning settings, see \cite{DBLP:journals/corr/ZhuSingla18} for a recent survey. 
Of particular interest to us are works on teaching online learners such as Online Gradient Descent (OGD)~\cite{liu2017iterative,lessard2018optimal}, active learners~\cite{hanneke2007teaching,peltola2019machine}, and other sequential learners ~\cite{hunziker2019teaching,mansouri2019preference,zhang2019online, jun2018adversarial, ma2018data,wang2018data}. The optimal control formulation is required when the learner becomes sequential.
Several recent works studied teaching on Inverse Reinforcement Learning (IRL)~\cite{DBLP:conf/nips/TschiatschekGHD19,DBLP:conf/ijcai/KamalarubanDCS19,brown2019machine,haug2018teaching,cakmak2012algorithmic}, where learner learns from teacher demonstration.
Finally, computational teaching in reinforcement learning have been studied recently \cite{zhang2020adaptive, rakhsha2020policy,ma2019policy}, where teacher teach via rewards and/or state transitions. Our work instead focuses on using computational teaching theory to understand how humans teach.

\section{Preparing a Family of Reinforcement Learners for Teaching}\label{sec:mach-teach}

In the teaching problem of reinforcement learning, we study the interaction between three entities: the RL agent (student), the teacher, and the underlying environment. In this work, we assume that the environment is a \textbf{rewardless} Markov Decision Process (MDP) parametrized by $M = (\mathcal S, \mathcal A, P, \mu_0)$ where $S$ is the state space, $\mathcal A$ is the action space, $P: \mathcal S\times \mathcal A \times \mathcal S \rightarrow \R$ is the transition probability, and $\mu_0: \mathcal S\rightarrow \R$ is the initial state distribution.
\subsection{A Family of Q-Learners}
In our human experiments, we focus on a family of Q learners, denoted by $\mathcal L$, that is widely studied as potential theory of mind for humans, among which we describe two variants: (1) standard Q-learning and (2) Action Signaling (AS) \cite{ho2019}. The two learners mainly differ in their internal knowledge representation and how they perform learning updates.
 
\textbf{A Q-learning agent} stores an estimate of the \textit{Q table}, $Q: \S\times \A\rightarrow \R$, which approximates the future cumulative rewards that the agent can receive after perform an action $a\in\A$ in a state $s\in\S$. The learning update rule of Q-learning is defined by 2 parameters, the learning rate $\alpha$, and the discounting factor $\gamma$. $\alpha$ determines how aggressive the learner updates the current belief given the new experience, and $\gamma$ indicates how much the learner value future rewards compared to immediate rewards. Specifically, given a new piece of experience $e_t = (s_t, a_t, s_{t+1}, r_t)$, Q-learning only updates the $(s_t,a_t)$ entry of its Q table as
\begin{equation}
Q_{t+1}(s_t,a_t) = (1-\alpha)Q_t(s_t,a_t) + \alpha (r_t + \gamma \max_{a'} Q_t(s_{t+1},a'))
\end{equation}

\textbf{An Action Signaling (AS1) agent} stores a multinomial distribution over actions for each state, representing its current belief distribution of the optimal action \cite{ho2019}. In this paper, we represent the multinomial distribution also with a table, $Q: \S\times \A\rightarrow \R$, where $\sum_{a\in\A} Q(s,a)=1$. The learning update of an AS agent is defined by a single parameter, the learning rate $\kappa$, which serves the same role as $\alpha$ in Q-learning. Given a new piece of experience $e_t = (s_t, a_t, s_{t+1}, r_t)$, an AS agent will update the belief distribution over the current state $s_t$. If $r_t>0$, the probability w.r.t. $a_t$ will increase, whereas if  $r_t<0$, the probability w.r.t. $a_t$ will decrease. In contrast to Q-learning, an AS agent does not make use of the next state $s_{t+1}$, and does not aim at optimizing long term reward. Specifically,
\begin{equation}\label{eq:as1}
Q_{t+1}(s_t,a) =
{ Q_{t}(s_t,a)  e^{1_{[a=a_t]} \kappa r_t}
	\over
	\sum_{b \in A} Q_{t}(s_t,b) e^{1_{[b=a_t]} \kappa r_t} }
, \; \forall a \in A.
\end{equation}
We further observe that even without the normalization step, $\argmax_a Q_t(s,a)$ will not change. Therefore, the learning update rule \ref{eq:as1} can be equivalently changed to $Q_{t+1}(s_t,a) = Q_{t}(s_t,a)  e^{1_{[a=a_t]}} \kappa r_t$. We again observe that if instead of $Q_t(s,a)$, we store $Q'_t(s,a)=\log Q_t(s,a)$, then the update rule becomes $Q'_{t+1}(s,a) = Q'_t(s,a) +\kappa r_t$. Finally, we can drop the $\kappa$ factor, since it's a constant scaling on all $(s,a)$-pairs, and we get $Q'_{t+1}(s,a) = Q'_t(s,a) + r_t$. Therefore, we have established that \textbf{AS is equivalent to a variant of Q-learning}, which simply stores the sum of rewards received on each $(s,a)$-pair. This motivates us to define another variant of the AS agent that instead of taking the sum of rewards, takes the average of rewards on each $(s,a)$-pair.

\textbf{An Action Signaling (AS2) agent} stores the average of rewards on each $(s,a)$-pair, whose update rule can be equivalently written as
\begin{equation}\label{eq:as2}
Q_{t+1}(s,a) = (1-\frac{1}{n_t(s,a)}) Q_t(s,a) +\frac{1}{n_t(s,a)} r_t
\end{equation}
where $n_t(s,a)$ denotes the number of times the current $(s,a)$-pair has been visited. Similar to AS1, AS2 do not take into account of future rewards, and is in fact equivalent to a Q-learning agent with $\gamma = 0$, and time-varying learning rate $\alpha_t = 1/n_t(s,a)$. For those who are familiar with the classic RL/multi-armed bandit literature, the AS2 agent is equivalent to a multi-armed bandit algorithm that treats the MDP as $S$ parallel $A$-arm bandits.

We further assume that all learners will behave according to the \textbf{$\mathbf{\epsilon}$-greedy policy} w.r.t. the current Q table, i.e.
\begin{equation}
a_t = \pi_t^\epsilon(s_t) = \left\{
\begin{array}{ll}
\argmax_a Q_t(s_t, a), & \mbox{ w.p. } 1-\epsilon \mbox{, break ties uniformly} \\
\mbox{uniform from } A, & \mbox{ w.p. } \epsilon.
\end{array}
\right.
\label{eq:explorationpolicy}
\end{equation}
We distinguish two teaching settings:
In a \textit{white-box} teaching setting, we assume an \textit{omniscient teacher}, who has knowledge of the underlying MDP $M$ and the learning agent's parameters. It also observes the current interaction  $s_t,a_t, s_{t+1}$ as well as the internal state of the agent $Q_t$, before providing the reward signal $r_t$; In a \textit{black-box} teaching setting, we assume that the teacher can still observe $(s_t,a_t, s_{t+1},Q_t)$ but does not know the precise learner update rule, and therefore cannot accurately predict the consequence of the current reward signal.

The teacher's goal is to drive the learner to learn a target policy $\pi^\dagger$.
For example, the teacher may want the learner to always perform a specific action $a^\dagger \in A$ at state $s^\dagger \in S$.
This example goal can be expressed as a target set of Q tables that satisfy
$\mathcal Q^\dagger := \{Q: Q(s^\dagger, a^\dagger) > Q(s^\dagger, a),\forall a\neq a^\dagger\}$.
The teaching succeeds if the learner's $Q_t$ falls into $\mathcal Q$ at some time step $t$, in which case the teaching process terminates. We are interested in calculating a teaching strategy that achieves the teaching goal with the fewest time steps.

\subsection{An Optimal Teacher can Teach the Family of Q-Learners Equally Fast}
One of our main observations is that the optimal teaching problem in the white-box teaching setting forms a higher level teaching MDP $\mathcal N=(\Xi, \Delta, \rho, \tau)$:
    \begin{itemize}[leftmargin=*, nolistsep]
    	\item The teacher observes the \textbf{teacher state} $\xi_t \in \Xi$, which jointly characterizes the environment and the learner at time $t$:
    	$
    	\xi_t := (s_t, a_t, s_t', Q_t).
    	$
    	\item The teacher's action space consists of all possible rewards $r_t \in \R = \Delta$.
    	\item The teacher receives a constant cost of $\rho_t = 1$ for every time step before the teaching goal is accomplished.
    	\item The teaching state transition probability is specified by $\tau(\xi_{t+1} \mid \xi_t, r_t)$.
    	The new attack state $\xi_{t+1}=(s_{t+1}, a_{t+1}, s'_{t+1}, Q_{t+1})$ is generated as follows: $s_{t+1}$ is copied from $s_t'$ in $\xi_t$; $a_{t+1} \sim \pi^\epsilon_{t+1}(s_{t+1})$; $s'_{t+1}\sim P(\cdot | s_{t+1}, a_{t+1})$; 
    	$Q_{t+1}$ is the learner's updated Q table.
    \end{itemize}
   	The optimal teaching strategy is one where the teacher minimize its cumulative attack cost,
    $\sum_{t=0}^{T} \rho_t$, s.t. $Q_T\in \mathcal{Q}$.
    Due to the randomness of the MDP as well as the learner's behavior policy, this quantity is a random variable, and thus we instead minimize its expected value.
    Formally, the teacher seeks a time-invariant \textbf{teaching policy} $\phi^*$: 
    $
    \phi^* = \argmin_{\phi: \Xi \mapsto \Delta} \mathbb E_{\mathcal N} \left[ \sum_{t=0}^{T} \rho_t \mbox{ , .s.t }Q_T\in \mathcal Q^\dagger\right].
    $
    We will also denote the shortest expected time step to achieve the teaching goal as the \textbf{teaching dimension} for the corresponding teaching problem instance, i.e. 
    \begin{equation}
    TD(M,L,Q_0, \pi^\dagger) = \min_{\phi: \Xi \mapsto \Delta} \mathbb E_{\mathcal N} \left[ \sum_{t=0}^{T} \rho_t \mbox{ , s.t. }Q_T\in \mathcal Q^\dagger\right]
    \end{equation}
    The optimal control formulation allows us to computationally solve for the optimal teaching strategy on any specific teaching instance, defined by the MDP, learner type and target policy to be taught, using any optimal control solver. In this paper, we use Twin Delayed DDPG (TD3)~\cite{fujimoto2018addressing}, a state-of-the-art Deep Reinforcement Learning (DRL) algorithm that solves continuous control problems.
    
    Our second theoretical insight is that the any learner in the Q-learning family  $\mathcal L$ has the same teaching dimension. Specifically,
    \begin{theorem}\label{thm: equivalence}
    	For any two teaching instances defined by $(M, L, Q_0, \pi^\dagger)$ and $(M, L', Q'_0, \pi^\dagger)$, where $L$ and $L'$ are two Q-learning agents, if $Q_0$ and $Q'_0$ satisfies that $Q_0(s,a)\geq Q_0(s,a')$ if and only if $Q'_0(s,a)\geq Q'_0(s,a')$ for all $s\in\S$, $a,a'\in \A$, then $TD(M, L, Q_0, \pi^\dagger) = TD(M, L, Q_0, \pi^\dagger)$.
    \end{theorem}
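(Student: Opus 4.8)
The plan is to show that both teaching instances reduce to one and the same \emph{purely combinatorial} teaching MDP that depends only on the environment $M$, the target policy $\pi^\dagger$, and the within-state preference orders of the initial $Q$-table --- but not on the learner's parameters --- and then to invoke the hypothesis, which says exactly that $Q_0$ and $Q'_0$ induce the same orders. (We prove $TD(M,L,Q_0,\pi^\dagger)=TD(M,L',Q'_0,\pi^\dagger)$.) The first ingredient is the observation that \emph{only orders matter}. For a table $Q$, let $O(s)$ be the weak order (total preorder) on $\A$ given by $a\succeq_s b \iff Q(s,a)\ge Q(s,b)$, and $O=(O(s))_{s\in\S}$. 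Every learner-dependent piece of the teaching MDP $\N$ sees $Q_t$ only through $O_t$: the $\epsilon$-greedy policy of \eqnref{eq:explorationpolicy} depends on $Q_t(s,\cdot)$ only through $\argmax$ --- including the uniform tie-break among maximizers --- hence through $O_t(s)$; the stopping test ``$Q_t\in\Q^\dagger$'' is, by definition of $\Q^\dagger$, precisely that $a^\dagger$ is the strict maximum of $O_t(s^\dagger)$; and the per-step cost is the constant $1$. So the law of the random pieces $(a_{t+1},s'_{t+1})$ produced by $\tau$, and the stopping event, are functions of $(s_t,a_t,s'_t,O_t)$ and $M$ alone.

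Next I would establish that \emph{the teacher fully controls the one updated coordinate}. For any learner in $\mathcal L$ --- standard $Q$-learning, and, via the reductions of \secref{sec:mach-teach}, AS1 and AS2 --- and any teacher state $\xi_t=(s_t,a_t,s'_t,Q_t)$, the update touches only the $(s_t,a_t)$ coordinate of the (reduced) stored table, and in the coordinates used for acting the map $r_t\mapsto Q_{t+1}(s_t,a_t)$ is affine with a strictly positive slope (the effective learning rate: $\alpha\in(0,1]$ for $Q$-learning, $1$ for AS1 in its additive/log form, $1/n_t(s_t,a_t)$ for AS2), while the bootstrap term $\gamma\max_{a'}Q_t(s'_t,a')$ is a constant determined by the observable $(s'_t,Q_t)$. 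Since $r_t$ ranges over all of $\Delta=\mathbb{R}$, the teacher can place $Q_{t+1}(s_t,a_t)$ at any real value. Hence the attainable next-orders are exactly: $O_{t+1}(s)=O_t(s)$ for $s\neq s_t$, and $O_{t+1}(s_t)$ is obtained from the restriction of $O_t(s_t)$ to $\A\setminus\{a_t\}$ by inserting $a_t$ into an arbitrary slot (above the top level, below the bottom level, strictly between two consecutive levels, or tied with an existing level). This menu of moves is a function of $O_t(s_t)$ only; in particular it is identical for two learners whose tables have the same weak orders.

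With these in hand I would define the abstract MDP $\bar\N$ whose states are $(s_t,a_t,s'_t,O_t)$, whose actions are the order-insertion moves above, whose transition draws $a_{t+1}$ $\epsilon$-greedily from $O_{t+1}(s_{t+1})$ and $s'_{t+1}\sim P(\cdot\mid s_{t+1},a_{t+1})$ with $s_{t+1}=s'_t$, and which charges unit cost until $a^\dagger$ is the strict maximum of $O(s^\dagger)$. By the two observations $\bar\N$ is well defined and depends only on $(M,\pi^\dagger,O_0)$, where $O_0$ is the order of $Q_0$. I then claim $TD(M,L,Q_0,\pi^\dagger)$ equals the optimal expected cost of $\bar\N$ started from $O_0$. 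By backward induction using the two observations, the optimal value function of $\N$ factors through $O_t$; hence there is an optimal teaching policy that at $\xi_t$ reads off $O_t$, picks an optimal insertion move, and then outputs via the closed form of the second step a concrete reward realizing that move from $(s'_t,Q_t)$ --- a legitimate function of the observable $\xi_t$ --- with cost $\mathrm{OPT}(\bar\N)$; conversely any policy on $\bar\N$ lifts this way, and any policy on $\N$ projects to $\bar\N$ without changing its cost, because transitions and stopping time only read $O_t$. Finally the hypothesis on $(Q_0,Q'_0)$ says exactly that the two tables induce the same $O_0$, and $\bar\N$ depends on the learner in no other way, so $TD(M,L,Q_0,\pi^\dagger)=\mathrm{OPT}(\bar\N)=TD(M,L',Q'_0,\pi^\dagger)$.

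The main obstacle is not any quantitative estimate --- the reduction is exact, so no metric or contraction argument enters --- but the combinatorial bookkeeping around ties: because $\epsilon$-greedy breaks ties uniformly and the goal demands a \emph{strict} maximum, the whole argument must be run with weak orders rather than strict ones, and I must check in the control step that every slot, including ``insert $a_t$ tied with an existing level'', is realizable for each learner by solving the relevant affine equation for $r_t$, and in the reduction step that the lifted teacher policy is a well-defined function of $\xi_t$ alone, which is exactly what the explicit reward formula buys us.
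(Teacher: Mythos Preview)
Your proposal is correct and rests on the same two observations as the paper's proof: (i) the $\epsilon$-greedy behavior, the stopping event $Q_t\in\Q^\dagger$, and the cost depend on $Q_t$ only through the within-state preference order; and (ii) because the update touches only $Q_t(s_t,a_t)$ and is affine in $r_t$ with positive slope while $r_t$ ranges over all of $\mathbb{R}$, the teacher can place that coordinate anywhere and hence realize any admissible next order. The paper packages these via a one-step coupling lemma: for any policy $\phi$ on $(M,L,Q_0,\pi^\dagger)$ there is a matching $\phi'$ on $(M,L',Q'_0,\pi^\dagger)$ that preserves the order-equivalence $Q_t\equiv Q'_t$ step by step under a common random seed, so the hitting times of $\Q^\dagger$ coincide. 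You instead factor both instances through an explicit abstract MDP $\bar\N$ on order-states and argue that each concrete $TD$ equals $\mathrm{OPT}(\bar\N)$. These are equivalent formulations of the same argument; what your route buys is a cleaner treatment of ties (you work with weak orders throughout, whereas the paper's ``rank'' language implicitly assumes strict orders and does not discuss the ``insert tied with an existing level'' move) and an explicit check that the control step goes through for all three learner variants in $\mathcal L$, not just vanilla $Q$-learning.
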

In other words, under the white-box setting, an optimal teacher can teach a learner in the family $\mathcal L$ equally fast.

\section{Human Teaching on the Family of Q-Learners}
But our human teachers do not really know what algorithm or parameters the students use.
This allows us to probe the human assumption on the student: we expect that when an student algorithm is closer to human preconception of a student, the human teacher will be better at teaching that student.

    How does human teaching compare to machine teaching in a task where a person needs to teach a reinforcement learner a full policy when the learner starts one action away from its environmental goal? Figure ~\ref{fig:task-UI} presents an idealized exploration-exploitation scenario. In the scenario, the dog's initial state is one tile left of its goal (the door which symbolizes its home). The teaching goal is to teach the dog to go home from every tile. Whenever the dog reaches its home, the next step it starts on the initial state. To teach the dog to get home from every square, the teacher must incentivize the dog to explore, which results in the dog moving away from its ultimate goal: It must punish moving to the goal and/or reward going away from the goal. Once the dog has gotten all the way to the leftmost tile, the teacher can begin to ``undo" their prior teaching and teach the dog to go right. Depending on how much reward/punishment the dog is given to explore, it may take many steps (or even be impossible given feedback with finite limits) to teach the dog to move right at every tile.

    To analyze the difficulty of this task, we used the machine teaching method described in Section ~\ref{sec:mach-teach} to provide the optimal teaching policy. The learner is a Q-learner that uses an $\epsilon$-greedy policy to select its action and it starts indifferent $Q_0(s,a) = 0$ for every feasible state-action pair $(s,a)$. For concreteness, we assume the learner is parametrized as follows: $\alpha=0.1$, $\gamma=0.9$, and $\epsilon=0.1$. Given this configuration, the optimal teaching policy is precisely the one discussed above (get the dog to move all the way left with minimal reward/punishment and then teach it to move right at every state). The number of steps to teach is a random variable whose outcomes depend on what actions are selected when the dog is indifferent and when suboptimal actions are taken. Thus, we approximated the expected number of steps via Monte Carlo (simulating teaching dogs using the optimal teaching policy and recording the number of steps it took for the full target policy to be taught). On average, the optimal teaching policy takes 11 steps to teach the dog the full target policy. It is 11 regardless of the value of $\alpha$ and $\gamma$ (assuming they are within the values that allow $Q$-learning to converge). Changing $\epsilon$ affects the optimal teaching policy and expected number of steps quantitatively, but the optimal teaching policy follows the same qualitative procedure as before. So, we set $\epsilon=0.1$ for the remainder of the paper.

    Although our machine teaching results suggest all Q-learners should be trainable with the same expected number of steps, can people train all learners? Are there some learners that are easier for people to train? As discussed earlier, recent work has found that traditional parametrization for Q-learners are extremely challenging for people to train on even simple tasks and the action-signaling model (which we proved is a special-case of Q-learning) is much easier for people to train \citep{ho2019}. However, they did not test a large set of parametrization for the discount rate ($\gamma$) and some of their analyses suggest that a Q-learner might be trainable when the discount rate is small. This is consistent with other work on hyperdiscounting by Taber and colleagues \citep{knox2009} that reinforcement learners with small discount rates are easier for people to teach. In this study, we investigate the role of the discounting parameter for a simple teaching task and examine how giving people the internal dynamics of the learning update affects their ability to teach efficiently.

    \subsection{Experimental Design}
    {\em Participants.} We recruited 791 participants through Amazon MTurk. 
The number of participants was chosen {\em a priori} based on the authors' intuition from similar previously conducted studies.  
We excluded 42 participants (11 for failing to complete the task, 19 due to experiment error, and 12 who selected ``do nothing'' for at least 36 steps). 
In addition, the dogs that were trained successfully with steps less than the optimal length (the number of steps the optimal teaching policy needs) were excluded from the analysis. For example, if the subject gave punishment on every step and the dog luckily went to the left at every tile all the way from the rightmost tile to the leftmost tile, the dog could be trained in just 4 steps. These data were excluded because they did not really reflect a success in training, but more likely a misunderstanding of the goal and luck. This composes 9.03 \% of the total dogs (203 out of 2247 dogs).  
    
    {\em Interface/Stimuli.} The dog training task took place in a 4 $\times$ 1 MDP with an absorbing state on the right. Figure ~\ref{fig:task-UI} shows the visual interface that the participants interacted with. Four states were represented by four tiles that the dog could walk on, and the absorbing state was represented by a door. At each step, the dog could only go right or left from a tile to the nearby tile or the door.  If the dog went left at the leftmost tile, the dog would stay at the same tile. If the dog went right at the rightmost tile, the dog would go to the door and then placed back to  the rightmost tile. The training for a dog ended if the dog had successfully learned the target policy, or the dog had already taken 40 steps but still had not learned the target policy. Once the training for the dog ended, a new dog with a different color would be shown and the participant was asked to train the new dog. The internal states were displayed as two rows provided by a ``brain scanner", which corresponded to the Q table and current optimal policy. At the beginning of each dog, the dog was placed at the rightmost tile with an initial Q table where the Q value of each state-action pair is zero. Participants responded via a continuous slider (feedback of -1 to 1), or could select a button to ``do nothing'' (feedback of zero). 
    \begin{figure}
        \centering
        \includegraphics[width=0.7\textwidth]{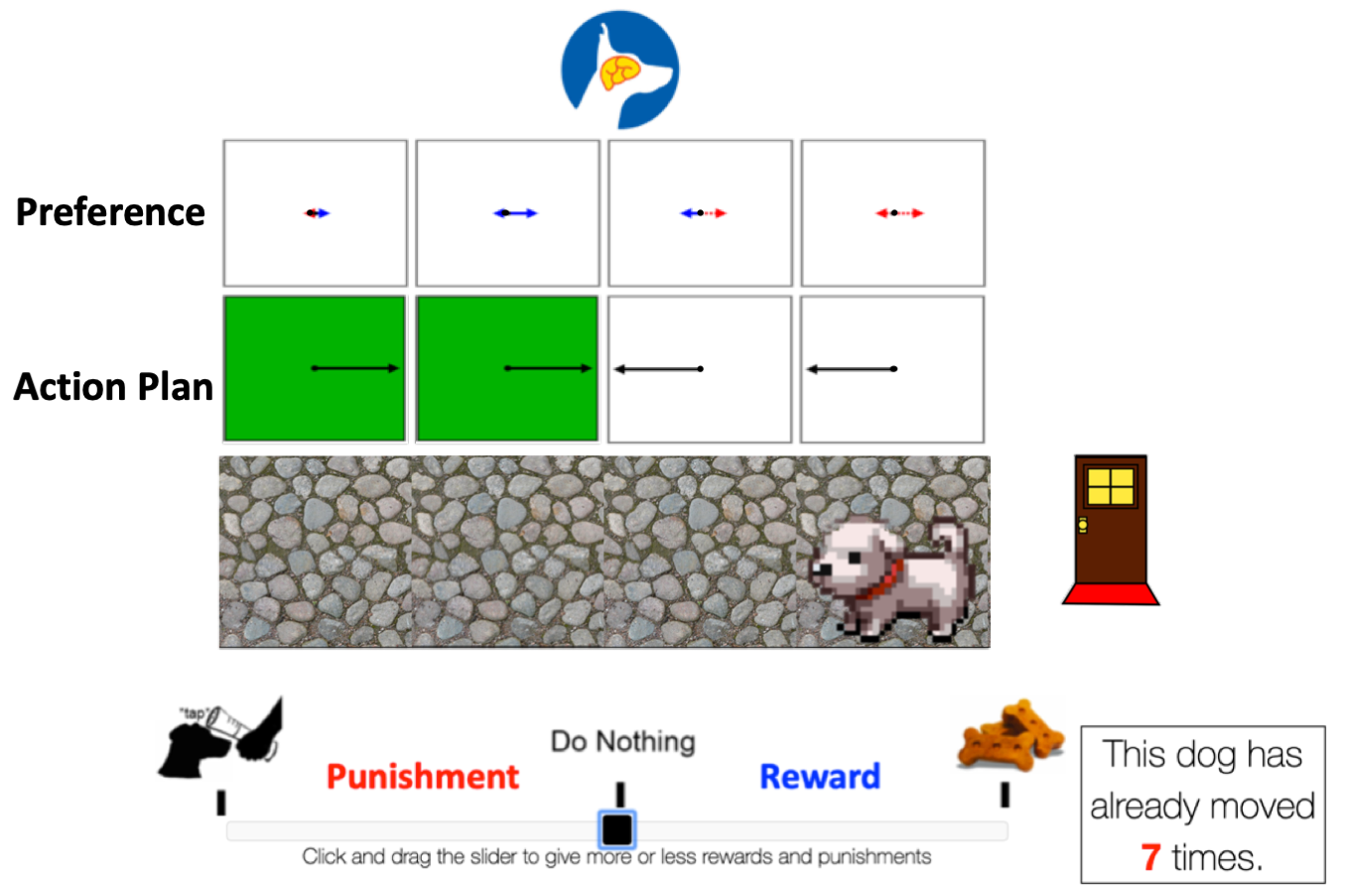} \\        
        \caption{The dog training task took place in a small garden that was composed of four tiles and a door on the right. The dog's current Q table (labeled ``Preference") and policy (labeled ``Action Plan") were shown above the garden. In the Q table, there were four cells corresponding to the Q values of each of the four tiles. In each cell, there were arrows pointing to the left/right, which encoded the Q value of moving to the left/right at that tile. A blue solid arrow encoded positive Q value, whereas a red dotted arrows encoded negative Q value. The policy was based on the Q table, where the arrow pointed to the direction that the dog would go in an exploitation step. When the dog's policy at a tile dictated going towards the door, the background of the cell turned green, indicating the policy at that state matched the target policy. After the dog took its action, the participant was asked to provide feedback to the dog. The participant could used the slider to select the feedback value or clicked the ``do nothing'' button to give a zero feedback. After the participant chose the feedback, the dog's Q table and target would get updated accordingly, and the dog would perform its next action. In the bottom-right corner, there as a step counter that recorded the steps that the dog has taken so far.}
        \label{fig:task-UI}
    \end{figure}
    
    {\em Procedure.} Participants were asked to teach three dogs to go home from any tile. They were given an extensive quiz about the instructions and could not continue until every question was answered correctly (see Supplementary Materials). There were two between-subjects conditions: {\em learner type} $\times$ {\em learning dynamics}. Each participant was assigned to either one of the four traditional Q-learners ($\gamma \in  \{0.0, 0.1, 0.45, 0.9 \}, \alpha= 0.9 $) and two action-signaling models. 
Learning dynamics were either shown (the whitebox setting) or not shown (the blackbox setting) to the participant. if they were shown, then the internal states displayed to the participant were synchronized to the current position of the slider. Otherwise the internal states only updated after the participant provided feedback. On each trial, the dog would move from one state to another. It followed an $\epsilon$-greedy policy ($\epsilon=0.1$), where the policy was the optimal one for the current Q-matrix. When a random action was selected for the dog, a squirrel would appear in the direction of that action (participants were told that when a squirrel appeared, the dog would move towards it no matter what its internal states were). After the dog moved, the participant would respond by dragging the feedback slider or hitting the ``do nothing'' button. Training concluded when the participant successfully taught the target policy or 40 steps had completed. Participants trained three dogs and then was given a short survey to ensure that they treated the slider symmetrically and gather standard demographic data.


    \subsection{Results and Discussion}	
	\begin{table}
    \caption{The Descriptive Statistics of the Human Experiment}
    \label{table:exp1-desc-stats}
	 \begin{tabular}{m{6em} m{1.5em} m{4.5em}  m{10em}  m{11em}} \\[0.5ex]
	 \hline
	 Learner Type & Slider Sync & \# Subjects & Success Rate  (95\% CI) & Avg Steps when Successful (95\% CI)\\
	 \hline
	 AS1 & on & 66 & 30.1 ([23.4, 36.7]) & {\bf 20.6 ([18.8, 22.2])}\\
	 AS1 & off & 71 & 33.3 ([26.7, 40.0]) & 24.3 ([22.6, 25.9])\\
	 AS2 & on & 68 & 26.7 ([20.2, 33.1]) & 21.8 ([19.6, 23.9])\\
	 AS2 & off & 67 & 26.3 ([20.1, 32.6]) & 21.2 ([19.2, 23.2])\\
	 Q ($\gamma = 0.00$) & on & 57 & {\bf 51.3 ([43.4, 59.2])} &  20.9 ([19.5, 22.3]) \\
	 Q ($\gamma = 0.00$) & off & 57 & 47.4 ([39.4, 55.3]) & 22.3 ([20.8, 23.8]) \\
	 Q ($\gamma = 0.10$) & on & 57 & 47.1 ([39.3, 54.9]) & 21.3 ([19.7, 22.9])\\
	 Q ($\gamma = 0.10$) & off & 55 & 46.5 ([38.4, 54.7]) & 22.5 ([20.6, 24.4]) \\
	 Q ($\gamma = 0.45$) & on & 57 & 36.9 ([29.4, 44.5])  & 21.5 ([19.5, 23.5])\\
	 Q ($\gamma = 0.45$) & off & 61 & 35.2 ([27.8, 42.5]) & 23.8 ([21.8, 25.8]) \\
	 Q ($\gamma = 0.90$) & on & 68 & 21.9 ([16.0, 27.9]) & 22.0 ([19.9, 24.2])\\
	 Q ($\gamma = 0.90$) & off & 65 & 24.6 ([18.4, 30.8]) & 25.2 ([22.9, 27.4]) \\
	 \hline
	\end{tabular}
	\\[10pt]
	\caption*{Learner Type: the learner type of the dogs being trained. Slider Sync: whether the subjects can see the effect of the feedback on the Q table when they are sliding the slider to decide the feedback value. \# Subjects: The number of subjects that were included in the analysis. Success Rate (\%): The proportion of dogs that were successfully trained. Average Steps when Successful: The average steps the subjects used to train the dogs when the training is successful.  AS1: The AS1 agent. AS2: The AS2 agent. Q: The Q-learning agent.}
	\end{table}

    \begin{figure}
        \centering
        \includegraphics[width=.95\textwidth]{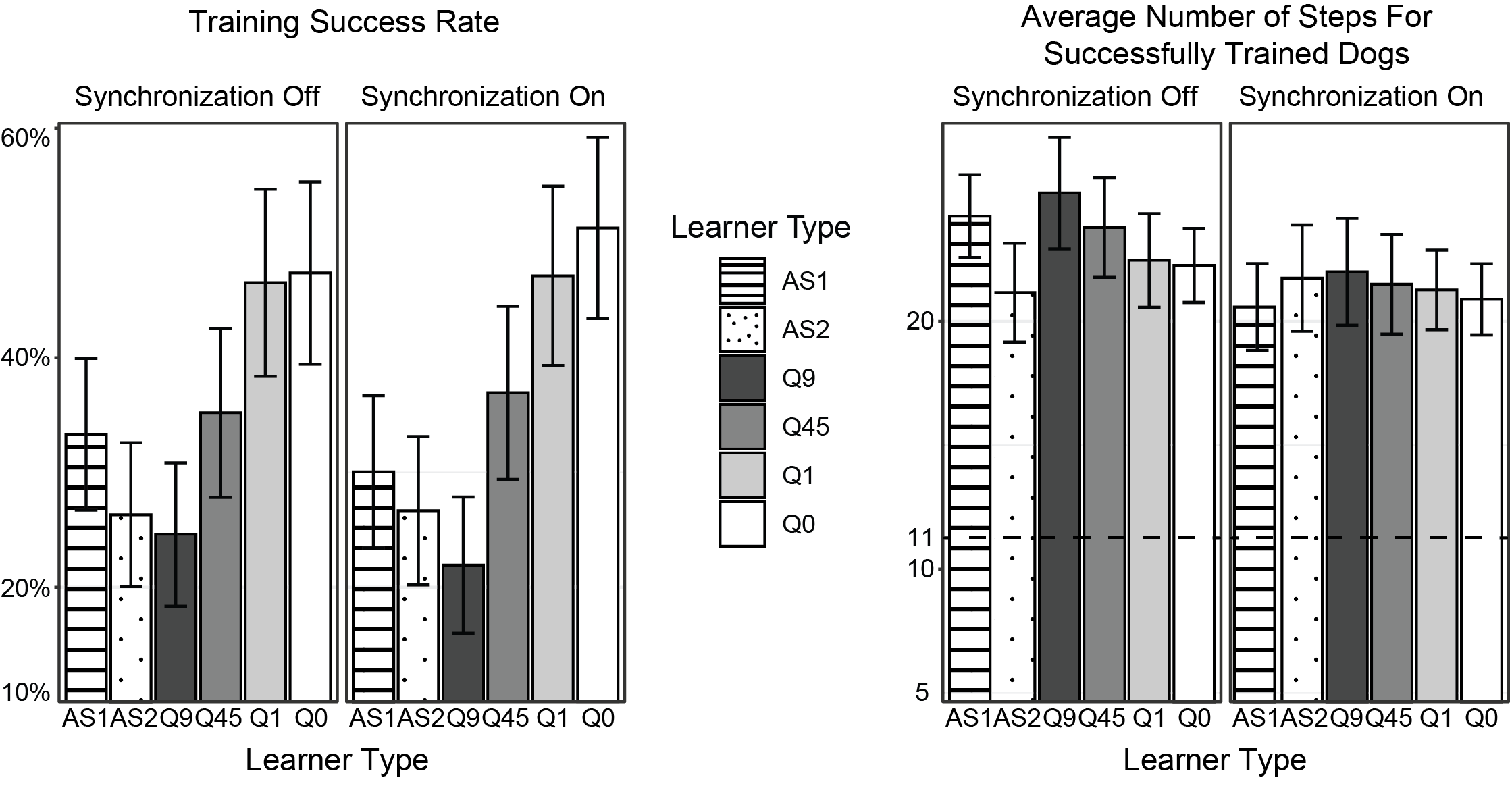} \\
        \caption{(Left) Participant success at training the full policy for different learner types and internal dynamics conditions (synchronization on vs. synchronization off). (Right) Average number of steps for successfully trained dogs for different learner types and internal dynamics conditions  (synchronization on vs. synchronization off). Note: error bars denote 95\% confidence intervals and the dashed line corresponds to the average number of steps for an optimal teacher. AS1: The AS1 agent. AS2: The AS2 agent. Q9: The Q learning agent with $\gamma$ = 0.9. Q45: The Q learning agent with $\gamma$ = 0.45. Q1: The Q learning agent with $\gamma$ = 0.1. Q0: The Q learning agent with $\gamma$ = 0. 95\% CI: the 95\% confidence interval.}
        \label{fig:expt-comb-res}
    \end{figure}

    Figure ~\ref{fig:expt-comb-res} shows the success rate of participants at training the full policy and the average number of steps taken when a dog was successfully trained. To analyze the success rate, we fit a mixed-effects logistic regression model where learner type, internal dynamics, and their interaction were fixed effects, and participant was a random effect. Learner type significantly influenced the success rate for a participant ($\chi^2(5) = 20.7, p < 0.001$), but the internal dynamics did not ($\chi^2(1)=0.66, p=0.42$). To analyze the average number of steps for successfully trained dogs, we fit a mixed-effects linear regression where learner type, internal dynamics, and their interaction were fixed effects, and participant was a random effect. There also was no interaction ($\chi^2(2) = 1.64, p = 0.90$). Both learner type and internal dynamics significantly affected the average number of steps a person took to train a dog successfully ($F(5,341.76)=2.21, p = 0.05$ and $F(1, 248.82)= 4.19, p < 0.05$, respectively).

    How well did people teach the dogs? Clearly they were suboptimal -- at best, the average success rate was 51\% and when successful took about 21 steps (optimal has a success rate of 100\% and takes an average of 11 steps). To analyze whether people adapted their teaching over time, we conducted the following permutation test. For each participant, we created 1000 simulated Q-learners. For each simulated Q-learner and state-action pair, we sampled a random permutation of the feedback provided by the participant for that state-action pair. This provides a Monte Carlo test of whether human teaching adapted with learning. Participants clearly adapted their behavior: none of the simulated Q-learners for any participant ever learned the target policy when the feedback for each action-state pair randomly permuted. Thus, human teaching is quite successful in this task (albeit still suboptimal).

In sum, we observed that human teachers are suboptimal: they did not always succeed in teaching the policy to an agent; and when they did, they required more steps compared to the 11 steps of the optimal teaching algorithm in Section 3.
This is not surprising: we never expected humans to be optimal teachers.
Our important discovery is that human teachers' ``favorite student'' is Q0, which differs from other Q-learning agents in two key parameters: the smallest possible discounting factor $\gamma=0$, and a large Q-update step size $\alpha=0.9$.
What does this mean?  The favorite student is one which is \emph{predictable by the teacher} (no complications from discounted future rewards because $\gamma=0$) and amenable to \emph{myopic control from the teacher} (large $\alpha$ emphasizes immediate teacher reward).
We thus suggest that human teachers assume such predictable, myopically controllable students when they teach.
On the other hand, whether the teacher can see the ``guts'' of the agent (whitebox / blackbox setting with effect of reward slider on / off, respectively) turns out to not be very important for the teacher.

    \section{Conclusions}
    In this paper, we conducted the first investigation of how people teach learners to solve the exploration-exploitation tradeoff, which is a common problem in everyday life. We did so by first formulating it as a machine teaching problem where the learner is assumed to be a Q-learner. We then ran a behavioral experiment of an idealized scenario to see how well people train Q-learners to solve the exploration-exploitation tradeoff when the teaching goal is an entire policy. We found that people are suboptimal, but quite successful, and that teaching was the easiest when the Q-learner had a small discount rate $\gamma$ and large update step size $\alpha$. We also showed that revealing how possible feedback would change the learner's beliefs weakly helped people teach in a more efficient manner.

    Given that our study is the first to explore how people teach learners to solve an exploration-exploitation tradeoff, there are potential concerns with the generalizability of the study. For example, it is unclear whether our results would generalize to other environments. It is also unclear how robust the results are to other types of reinforcement learners. For example, it seems plausible that a model-based reinforcement learning method would be easier to train on our idealized task. However, prior work suggests that model-based learners would struggle in more complex environments where positive net cycles are easily taught.

\newpage
\section{Broader Impacts}
Over the last few decades, human-machine interactions have become a regular part of everyday life. Although these interactions are often successful, they also often fail, usually due to the machine agent not properly understanding the person's feedback. When they do succeed, it is often because people have learned to interact with the machines, rather than machines learning or being created to interact with people in a manner that is natural to people. Not everyone can learn to interact with machines, especially those who have not kept up with current technology. This provides a large gap in the tools available to different people, with those who are disadvantaged often struggling to learn how to interact properly with the machines due to lack of experience.

One main method people interact with one another and machines is to teach by providing evaluative feedback. Despite much progress, scientists neither fully understand the mechanisms underlying this behavior nor have a formal description that can guide or be ``plugged into" machine agents. This paper provides the first mechanism-level analysis of how a learner should be taught, as well as an analysis of how people teach via evaluative feedback in an idealized exploration-exploitation task. It provides guidance for how engineers should parametrize machine agents who use Q-learning while interacting with people. This is a step towards the larger goal of natural human-machine interaction that is equally accessible to all.

\bibliography{ms}

\begin{thebibliography}{10}

\bibitem{brown2019machine}
Daniel~S Brown and Scott Niekum.
\newblock Machine teaching for inverse reinforcement learning: Algorithms and
  applications.
\newblock In {\em Proceedings of the AAAI Conference on Artificial
  Intelligence}, volume~33, pages 7749--7758, 2019.

\bibitem{cakmak2012algorithmic}
Maya Cakmak and Manuel Lopes.
\newblock Algorithmic and human teaching of sequential decision tasks.
\newblock In {\em Twenty-Sixth AAAI Conference on Artificial Intelligence},
  2012.

\bibitem{cohen2007should}
Jonathan~D Cohen, Samuel~M McClure, and Angela~J Yu.
\newblock Should i stay or should i go? how the human brain manages the
  trade-off between exploitation and exploration.
\newblock {\em Philosophical Transactions of the Royal Society B: Biological
  Sciences}, 362(1481):933--942, 2007.

\bibitem{fujimoto2018addressing}
Scott Fujimoto, Herke van Hoof, and David Meger.
\newblock Addressing function approximation error in actor-critic methods.
\newblock {\em arXiv preprint arXiv:1802.09477}, 2018.

\bibitem{goodman2016}
N.~D. Goodman and M.~C. Frank.
\newblock Pragmatic interpretation as probabilistic inference.
\newblock {\em Trends in Cognitive Sciences}, 20(11):818--829, 2016.

\bibitem{gopnik2020childhood}
Alison Gopnik.
\newblock Childhood as a solution to explore--exploit tensions.
\newblock {\em Philosophical Transactions of the Royal Society B},
  375(1803):20190502, 2020.

\bibitem{grice1975}
H.~P. Grie.
\newblock Logic and conversation.
\newblock In P.~Cole and J.~Morgan, editors, {\em Syntax and Semantics (Vol.
  3}, pages 41--58. Academic Press, 1975.

\bibitem{hanneke2007teaching}
Steve Hanneke.
\newblock Teaching dimension and the complexity of active learning.
\newblock In {\em International Conference on Computational Learning Theory},
  pages 66--81. Springer, 2007.

\bibitem{haug2018teaching}
Luis Haug, Sebastian Tschiatschek, and Adish Singla.
\newblock Teaching inverse reinforcement learners via features and
  demonstrations.
\newblock In {\em Advances in Neural Information Processing Systems}, pages
  8464--8473, 2018.

\bibitem{ho2019}
M.~K. Ho, F.~Cushman, M.~L. Littman, and J.~L. Austerweil.
\newblock People teaching with rewards and punishments as communication, not
  reinforcements.
\newblock {\em Journal of Experimental Psychology: General}, 148(3):520--549,
  2019.

\bibitem{ho2016}
M.~K. Ho, M.~Littman, J.~Mac{G}lashan, F.~Cushman, and J.~L. Austerweil.
\newblock Showing versus doing: Teaching by demonstration.
\newblock In {\em Advances in Neural Information Processing Systems}, pages
  3027--3035, 2016.

\bibitem{ho2015}
M.~K. Ho, M.~L. Littman, F.~Cushman, and J.~L. Austerweil.
\newblock Teaching with rewards and punishments: Reinforcement or
  communication?
\newblock In {\em Annual Proceedings of the 37th Cognitive Science Society
  Meeting}, 2015.

\bibitem{hunziker2019teaching}
Anette Hunziker, Yuxin Chen, Oisin Mac~Aodha, Manuel~Gomez Rodriguez, Andreas
  Krause, Pietro Perona, Yisong Yue, and Adish Singla.
\newblock Teaching multiple concepts to a forgetful learner.
\newblock In {\em Advances in Neural Information Processing Systems}, 2019.

\bibitem{jun2018adversarial}
Kwang-Sung Jun, Lihong Li, Yuzhe Ma, and Jerry Zhu.
\newblock Adversarial attacks on stochastic bandits.
\newblock In {\em Advances in Neural Information Processing Systems}, pages
  3640--3649, 2018.

\bibitem{DBLP:conf/ijcai/KamalarubanDCS19}
Parameswaran Kamalaruban, Rati Devidze, Volkan Cevher, and Adish Singla.
\newblock Interactive teaching algorithms for inverse reinforcement learning.
\newblock In {\em IJCAI}, pages 2692--2700, 2019.

\bibitem{knox2009}
W.~B. Knox and P.~Stone.
\newblock Interactively shaping agents via human reinforcement: The tamer
  framework.
\newblock In {\em Proceedings of the Fifth International Conference on
  Knowledge Capture}, pages 9--16, 2009.

\bibitem{lessard2018optimal}
Laurent Lessard, Xuezhou Zhang, and Xiaojin Zhu.
\newblock An optimal control approach to sequential machine teaching.
\newblock {\em arXiv preprint arXiv:1810.06175}, 2018.

\bibitem{liu2017iterative}
Weiyang Liu, Bo~Dai, Ahmad Humayun, Charlene Tay, Chen Yu, Linda~B Smith,
  James~M Rehg, and Le~Song.
\newblock Iterative machine teaching.
\newblock In {\em Proceedings of the 34th International Conference on Machine
  Learning-Volume 70}, pages 2149--2158. JMLR. org, 2017.

\bibitem{ma2018data}
Yuzhe Ma, Kwang-Sung Jun, Lihong Li, and Xiaojin Zhu.
\newblock Data poisoning attacks in contextual bandits.
\newblock In {\em International Conference on Decision and Game Theory for
  Security}, pages 186--204. Springer, 2018.

\bibitem{ma2019policy}
Yuzhe Ma, Xuezhou Zhang, Wen Sun, and Jerry Zhu.
\newblock Policy poisoning in batch reinforcement learning and control.
\newblock In {\em Advances in Neural Information Processing Systems}, pages
  14543--14553, 2019.

\bibitem{macglashan2017}
J.~Mac{G}lashan, M.~K. Ho, R.~Loftin, B.~Peng, G.~Wuang, D.~L. Roberts, M.~E.
  Taylor, and M.~L. Littman.
\newblock Interactive learning from policy-dependent human feedback.
\newblock In {\em Proceedings of the 34th International Conference on Machine
  Learning (Vol. 70)}, pages 2285--2294, 2017.

\bibitem{mansouri2019preference}
Farnam Mansouri, Yuxin Chen, Ara Vartanian, Jerry Zhu, and Adish Singla.
\newblock Preference-based batch and sequential teaching: Towards a unified
  view of models.
\newblock In {\em Advances in Neural Information Processing Systems}, pages
  9195--9205, 2019.

\bibitem{ng1999}
A.~Y. Ng, D.~Harada, and S.~Russell.
\newblock Policy invariance under reward transformations: Theory and
  application to reward shaping.
\newblock In {\em ICML}, volume~99, pages 278--287, 1999.

\bibitem{peltola2019machine}
Tomi Peltola, Mustafa~Mert {\c{C}}elikok, Pedram Daee, and Samuel Kaski.
\newblock Machine teaching of active sequential learners.
\newblock In {\em Advances in Neural Information Processing Systems}, pages
  11202--11213, 2019.

\bibitem{rakhsha2020policy}
Amin Rakhsha, Goran Radanovic, Rati Devidze, Xiaojin Zhu, and Adish Singla.
\newblock Policy teaching via environment poisoning: Training-time adversarial
  attacks against reinforcement learning.
\newblock {\em arXiv preprint arXiv:2003.12909}, 2020.

\bibitem{reid2016}
C.~R. Reid, H.~{MacDonald}, R.~P. Mann, J.~A.~R. Marshall, T.~Latty, and
  S.~Gamier.
\newblock Decision-making without a brain: how an amoeboid organism solves the
  two-armed bandit.
\newblock {\em Journal of the Royal Society: Interface}, 13:20160030, 2016.

\bibitem{shafto2014}
P.~Shafto, N.~D. Goodman, and T.~L. Griffiths.
\newblock A rational account of pedagogical reasoning: Teaching by, and
  learning from, examples.
\newblock {\em Cognitive Psychology}, 71:55--89, 2014.

\bibitem{shinohara1991teachability}
Ayumi Shinohara and Satoru Miyano.
\newblock Teachability in computational learning.
\newblock {\em New Generation Computing}, 8(4):337--347, 1991.

\bibitem{stephens2007}
D.~W. Stephens, J.~S. Brown, and R.~C. Ydenberg.
\newblock {\em Foraging: Behavior and Ecology}.
\newblock University of Chicago Press, Chicago, USA, 2007.

\bibitem{sutton2018reinforcement}
Richard~S Sutton and Andrew~G Barto.
\newblock {\em Reinforcement learning: An introduction}.
\newblock MIT press, 2018.

\bibitem{DBLP:conf/nips/TschiatschekGHD19}
Sebastian Tschiatschek, Ahana Ghosh, Luis Haug, Rati Devidze, and Adish Singla.
\newblock Learner-aware teaching: Inverse reinforcement learning with
  preferences and constraints.
\newblock In {\em Advances in Neural Information Processing Systems}, 2019.

\bibitem{wang2018data}
Yizhen Wang and Kamalika Chaudhuri.
\newblock Data poisoning attacks against online learning.
\newblock {\em arXiv preprint arXiv:1808.08994}, 2018.

\bibitem{zhang2020adaptive}
Xuezhou Zhang, Yuzhe Ma, Adish Singla, and Xiaojin Zhu.
\newblock Adaptive reward-poisoning attacks against reinforcement learning.
\newblock {\em arXiv preprint arXiv:2003.12613}, 2020.

\bibitem{zhang2019online}
Xuezhou Zhang, Xiaojin Zhu, and Laurent Lessard.
\newblock Online data poisoning attack.
\newblock {\em arXiv preprint arXiv:1903.01666}, 2019.

\bibitem{DBLP:journals/corr/ZhuSingla18}
Xiaojin Zhu, Adish Singla, Sandra Zilles, and Anna~N. Rafferty.
\newblock An overview of machine teaching.
\newblock {\em CoRR}, abs/1801.05927, 2018.

\end{thebibliography}
\bibliographystyle{plain}

\newpage
\appendix
\appendixpage
\section{Proof of Theorem \ref{thm: equivalence}}
    \begin{theorem}\label{thm: equivalence1}
	For any two teaching instance defined by $(M, L, Q_0, \pi^\dagger)$ and $(M, L', Q'_0, \pi^\dagger)$, where $L$ and $L'$ are two Q-learning agent, if $Q_0$ and $Q'_0$ satisfies that $Q_0(s,a)\geq Q_0(s,a')$ if and only if $Q'_0(s,a)\geq Q'_0(s,a')$ for all $s\in\S$, $a,a'\in \A$, then $TD(M, L, Q_0, \pi^\dagger) = TD(M, L', Q_0', \pi^\dagger)$.
\end{theorem}
\begin{proof}
	We denote $Q_t \equiv Q'_t$ if $Q_0t(s,a)\geq Q_t(s,a')$ if and only if $Q'_t(s,a)\geq Q'_t(s,a')$ for all $s\in\S$, $a,a'\in \A$. The proof is based on a key technical lemma:
	\begin{lemma}\label{thm: lemma3}
		For any teaching policy $\phi$ for teaching instance $(M, L, Q_0, \pi^\dagger)$, there exists a matching teaching policy $\phi'$ for teaching instance $(M, L', Q'_0, \pi^\dagger)$, such that for any time $t$, if $Q_t \equiv Q'_t$ and $\xi_t = \xi'_t$, then $Q_{t+1} \equiv Q'_{t+1}$.
	\end{lemma}
Lemma \ref{thm: lemma3} implies that with the same random seed, the relationship $Q_{t+1} \equiv Q'_{t+1}$ will remain invariant through teaching when $\phi$ is used on $(M, L, Q_0, \pi^\dagger)$ and $\phi'$ used on $(M, L', Q'_0, \pi^\dagger)$. Thus, $Q_t\in\mathcal Q^\dagger$ if and only if $Q'_t\in\mathcal Q^\dagger$. Therefore, if there exists an optimal $\phi$ that achieves the teaching dimension for $(M, L, Q_0, \pi^\dagger)$, then the matching $\phi'$ also achieves the same teaching dimension for $(M, L, Q_0, \pi^\dagger)$. Thus, the $TD$ for both teaching instances much match. What remains is to prove Lemma \ref{thm: lemma3}.
\end{proof}
\begin{proof}[Proof of Lemma \ref{thm: lemma3}]
	Given a particular $Q_t$ and an experience tuple $\xi_t = (s_t, a_t, s_t')$, the Q-learning update rule will only modify the value of $Q_t(s_t,a_t)$ based on the teacher provided reward $r_t = \phi(\xi_t)$. Define the rank of $(s_t,a_t)$ in $Q_t$ as $rank_{Q_t}(s_t,a_t) = |\{a| Q_t(s_t,a)>Q_t(s_t,a_t)\}|$. Under the Q-learning update rule 
	\begin{equation}
	Q_{t+1}(s_t,a_t) = (1-\alpha)Q_t(s_t,a_t) + \alpha (r_t + \gamma \max_{a'} Q_t(s_{t+1},a')),
	\end{equation}
	it is obvious that there exist $r_t$ such that the rank of $(s_t,a_t)$ in $Q_{t+1}$ can be any of $0,1,...,A-1$. Assume now that the rank of $(s_t,a_t)$ in $Q_{t+1}$ becomes $k$ after updating with $r_t = \phi(\xi_t)$. Define $\phi'$ such that $r'_t = \phi'(\xi'_t)$ will also update the rank of $(s_t,a_t)$ in $Q'_{t+1}$ to be $k$. Then, since $Q_t \equiv Q'_t$ and for both $Q_{t+1}$ and $Q'_{t+1}$ the rank of $(s_t,a_t)$ becomes $k$, we have $Q_{t+1} \equiv Q'_{t+1}$. This concludes the proof.
\end{proof}

\section{Experiment Setting and Hyperparameters for TD3}
\label{sec:TD3params}
Throughout the experiments, we use the following set of hyperparameters for TD3, described in Table \ref{table:hyperparameters}. The hyperparameters are selected via grid search on the Dog MDP. Each experiment is run for 5000 episodes, where each episode is of 200 iteration long. The learned policy is evaluated for every $50$ episodes, and the policy with the best evaluation performance is used for the computation of the teaching dimension. In the computation of the teaching dimension, we run the best-found TD3 policy for 1000 episodes, and take the average number of steps to teach the target policy. This gives $11.0$.
\begin{table*}[t]
	\centering
	\begin{tabular}{| l | c | l |} 
		\hline
		Parameters & Values & Description\\ [0.5ex] 
		\hline\hline
		exploration noise & $0.5$ & Std of Gaussian exploration noise.\\
		batch size & 100 & Batch size for both actor and critic\\
		discount factor & 0.99 & Discounting factor for the attacker problem.\\
		policy noise & 0.2 & Noise added to target policy during critic update.\\
		noise clip & $[-0.5, 0.5]$ & Range to clip target policy noise.\\
		action L2 weight & 50 & Weight for L2 regularization added to the actor network loss function.\\
		buffer size & $10^7$ & Replay buffer size, larger than total number of iterations.\\
		optimizer & Adam & Use the Adam optimizer.\\
		learning rate critic & $10^{-3}$ & Learning rate for the critic network.\\
		learning rate actor & $5^{-4}$ & Learning rate for the actor network.\\
		$\tau$ & $0.002$ & Target network update rate.\\
		policy frequency & 2& Frequency of delayed policy update.\\
		\hline
	\end{tabular}
	\caption{Hyperparameters for TD3.}
	\label{table:hyperparameters}
\end{table*}
\section{Datasets}
   
\subsection{Data Collection Process}

Participants accessed the study on Amazon MTurk. Each participant was first given an instruction about the task (Fig.~\ref{fig:instruction}) and was required to complete an extensive quiz about the instructions (Fig.~\ref{fig:quiz}) and could not continue until every question was answered correctly.
Participants trained three dogs and then was given a short survey to ensure that they treated the slider symmetrically and gather standard demographic data. Each participant was reimbursed for \$2.00 for taking the task.
         
\subsection{Participants and Data Exclusion}
We recruited 791 participants through Amazon MTurk. 
The number of participants was chosen {\em a priori} based on the authors' intuition from similar previously conducted studies.  
We excluded 42 participants (11 for failing to complete the task, 19 due to experiment error, and 12 who selected ``do nothing'' for at least 37 steps out of 40 steps for training a single dog). 
In addition, the dogs that were trained successfully with steps less than the optimal length (the number of steps the optimal teaching policy needs) were excluded from the analysis. For example, if the subject gave punishment on every step and the dog luckily went to the left at every tile all the way from the rightmost tile to the leftmost tile, the dog could be trained in just 4 steps. These data were excluded because they did not really reflect a success in training, but more likely a misunderstanding of the goal and luck. This composes 9.03 \% of the total dogs (203 out of 2247 dogs).  In sum, there are 749 subjects (2044 dogs) in the analysis.

\begin{figure}[!t]
    \centering
    \begin{subfigure}{1\textwidth}
        \includegraphics[width=1.1\linewidth]{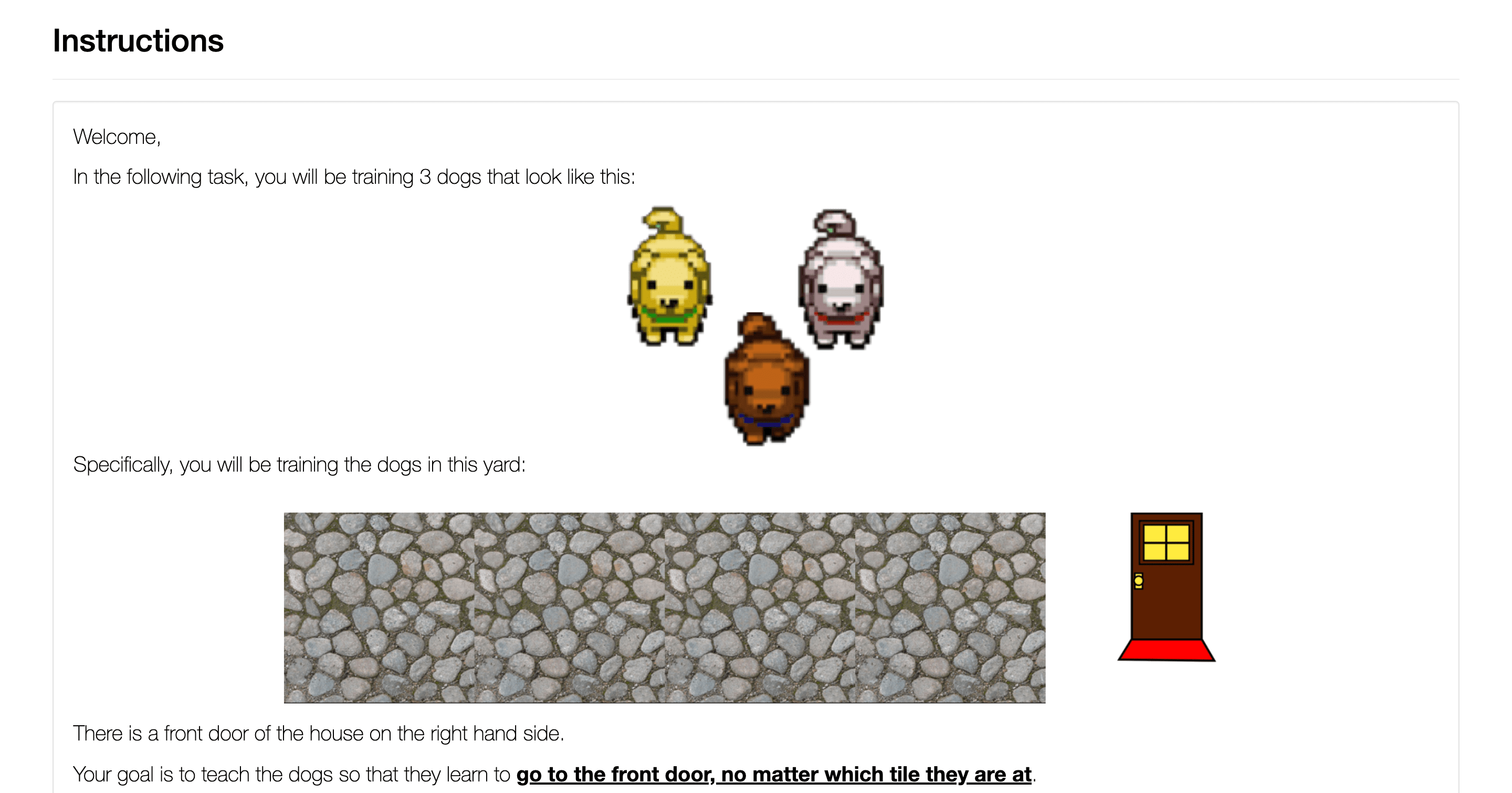}
    \end{subfigure}

\medskip
    \begin{subfigure}{1\textwidth}
        \includegraphics[width=1.1\linewidth]{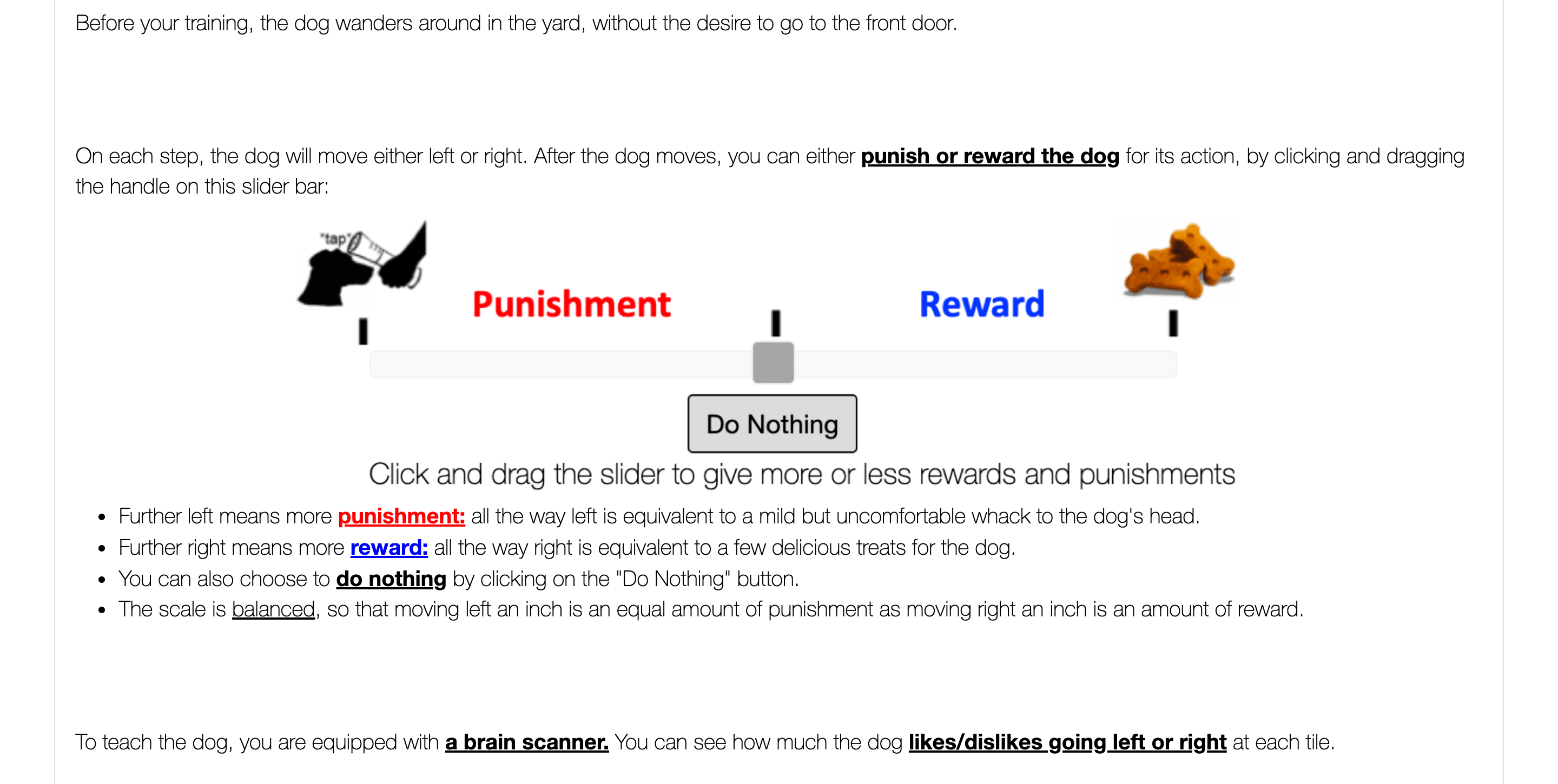}
    \end{subfigure}
    \caption[]{The instruction before taking the dog training task.}
\end{figure}

\begin{figure}[ht]\ContinuedFloat
    \centering
    \begin{subfigure}{1\textwidth}
        \includegraphics[width=1.1\linewidth]{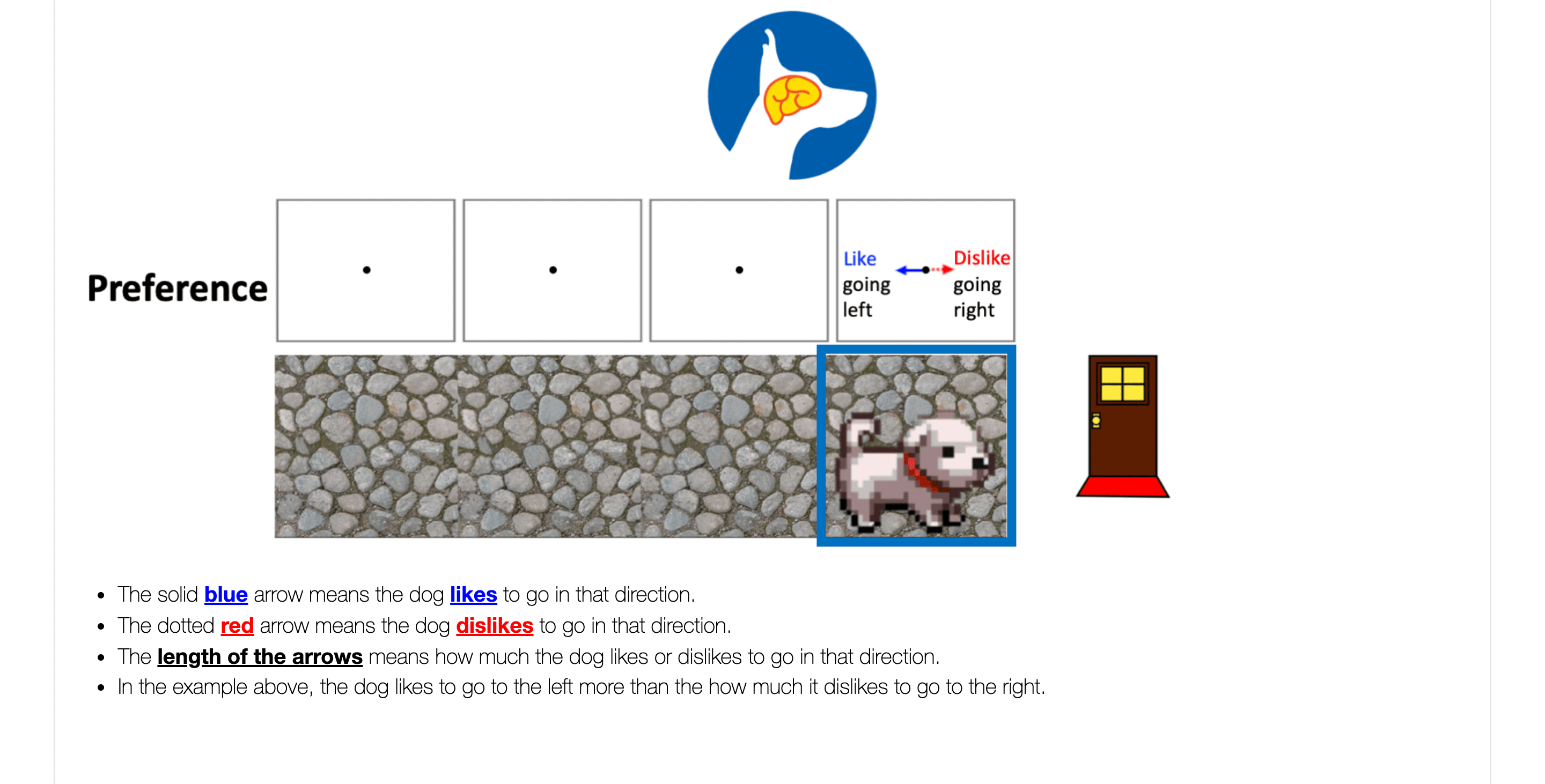}
    \end{subfigure}

\medskip
    \begin{subfigure}{1\textwidth}
        \includegraphics[width=1.1\linewidth]{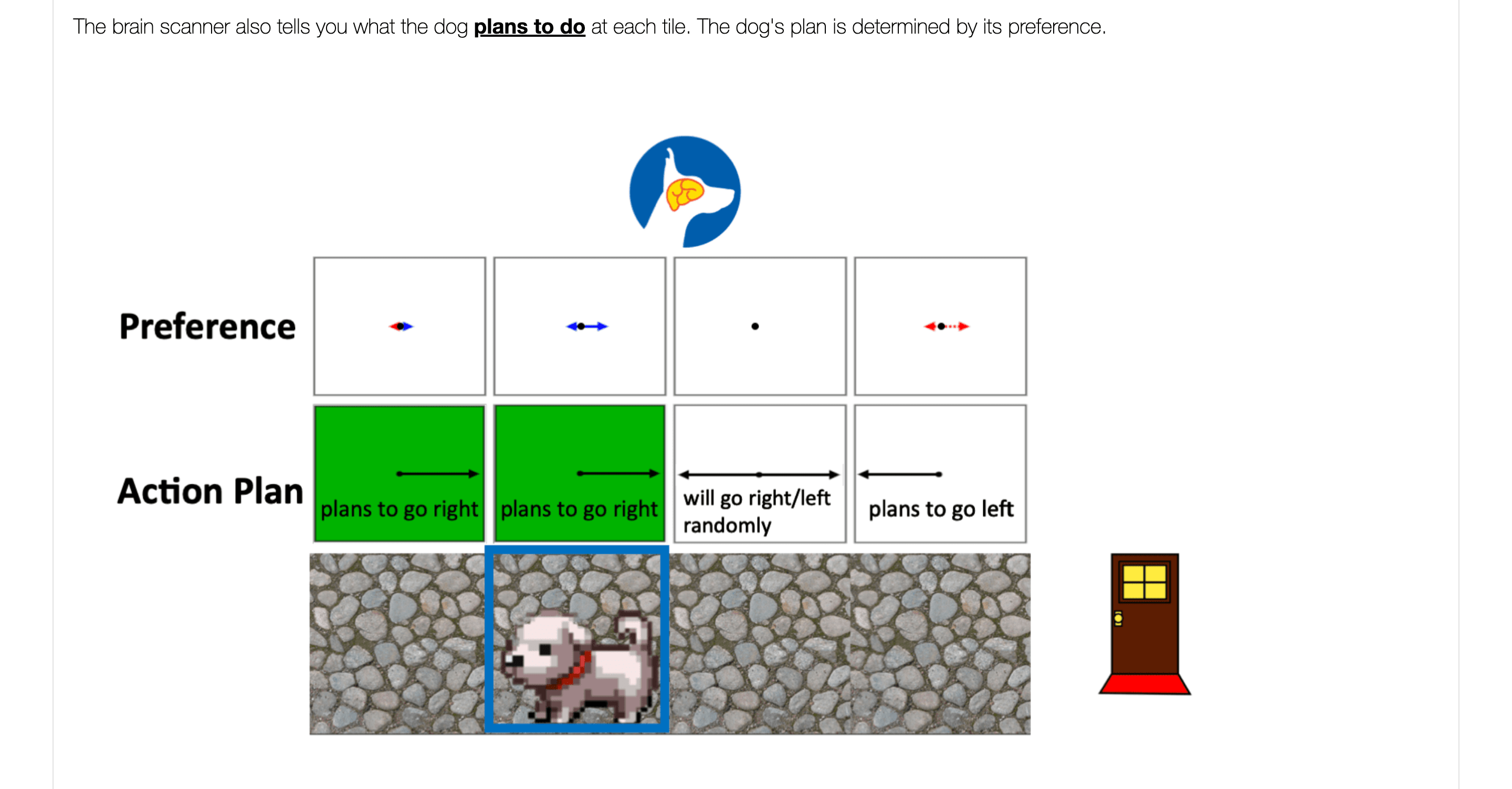}
    \end{subfigure}

    \begin{subfigure}{1\textwidth}
        \includegraphics[width=1.1\linewidth]{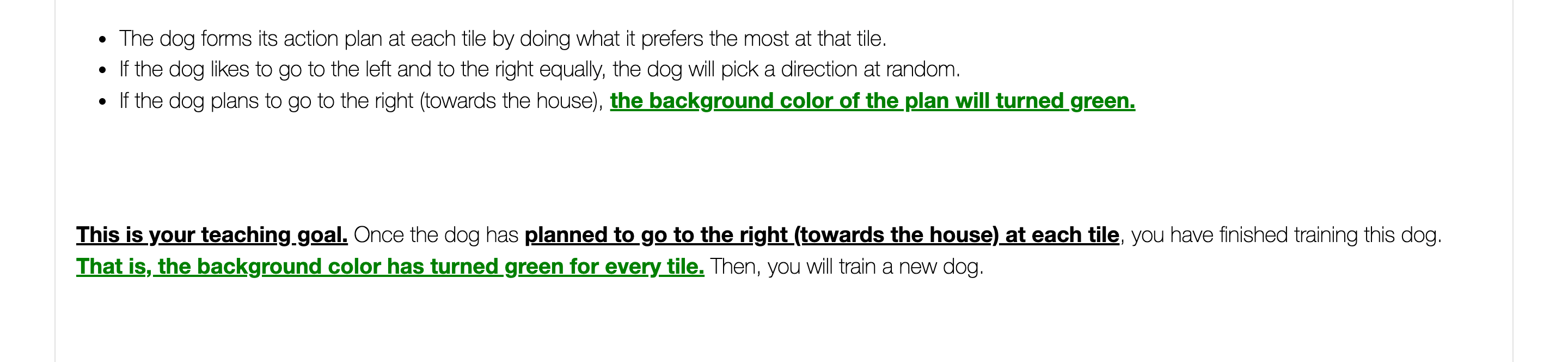}        
    \end{subfigure}    
    \caption[]{The instruction before taking the dog training task (continued).}    
\end{figure}

\begin{figure}[ht]\ContinuedFloat
    \centering    
    \begin{subfigure}{1\textwidth}
        \includegraphics[width=1.1\linewidth]{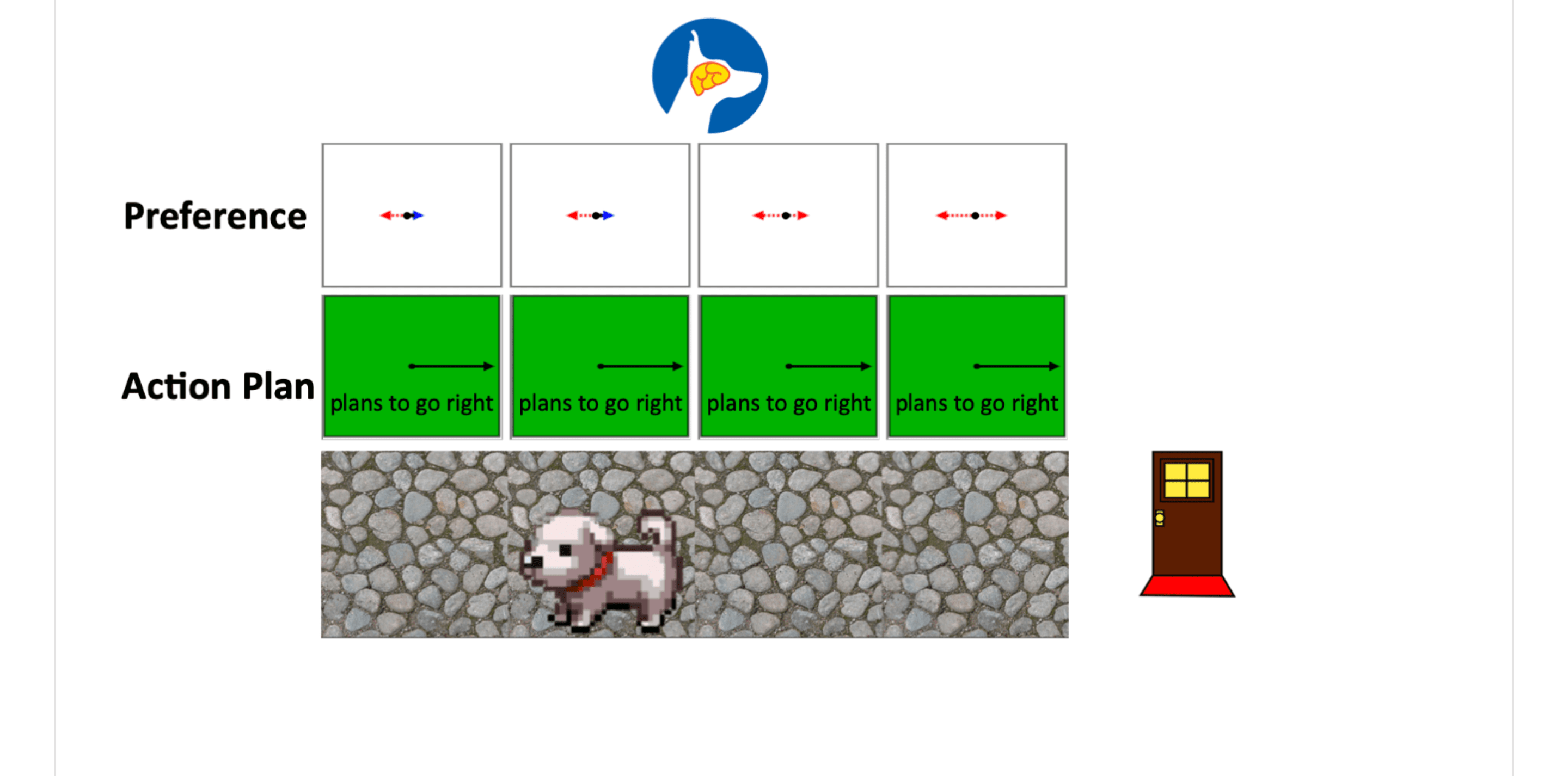}        
    \end{subfigure}        
    \begin{subfigure}{1\textwidth}
        \includegraphics[width=1.1\linewidth]{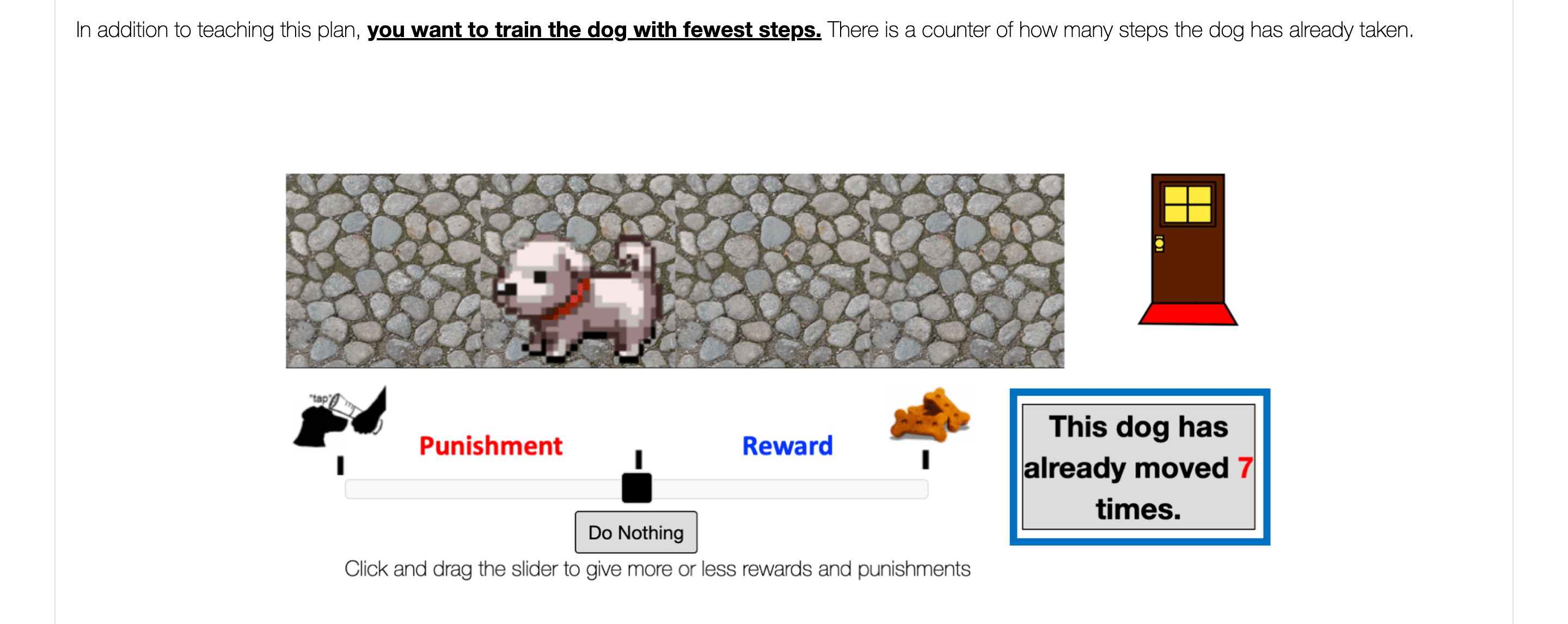}        
    \end{subfigure}          
    \begin{subfigure}{1\textwidth}
        \includegraphics[width=1.1\linewidth]{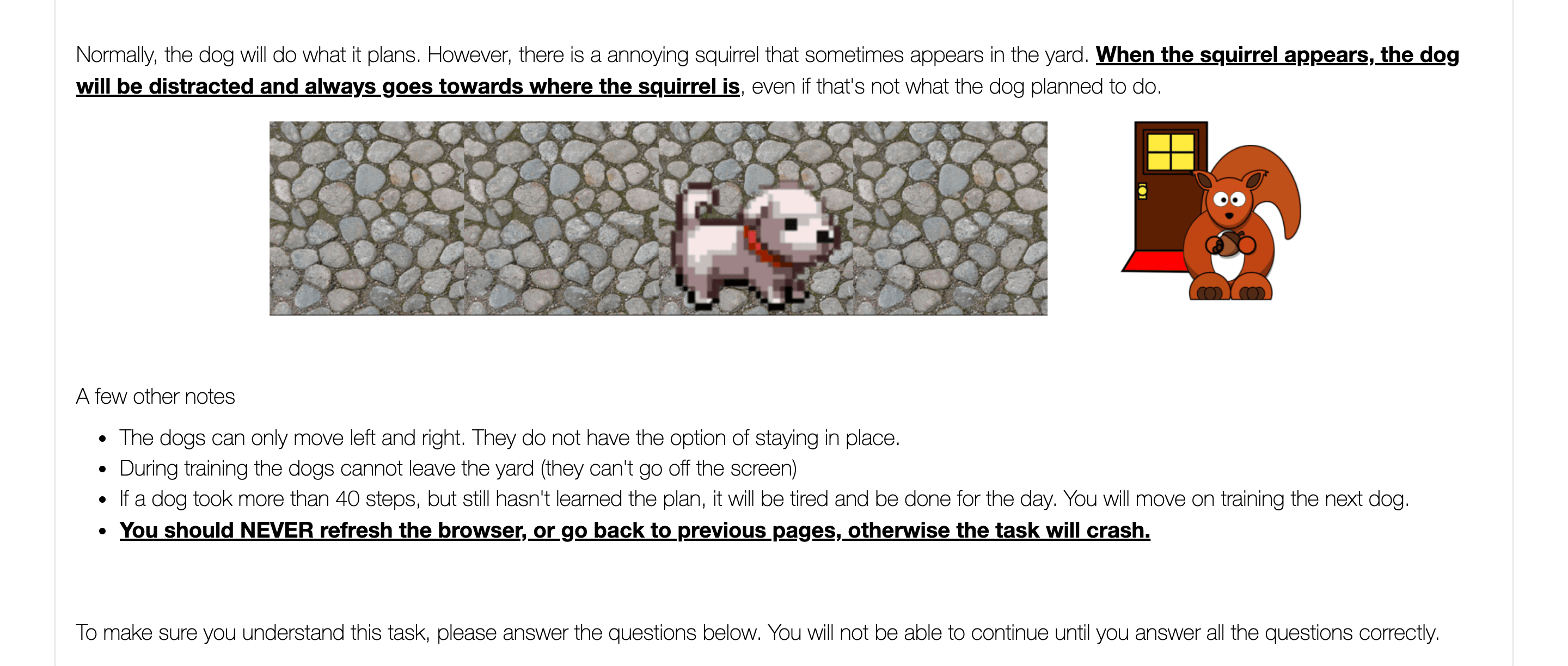}        
    \end{subfigure}              
    \caption[]{The instruction before taking the dog training task (continued).}
    \label{fig:instruction}
\end{figure}

\begin{figure}[!t]
    \centering
    \begin{subfigure}{1\textwidth}
        \includegraphics[width=1.1\linewidth]{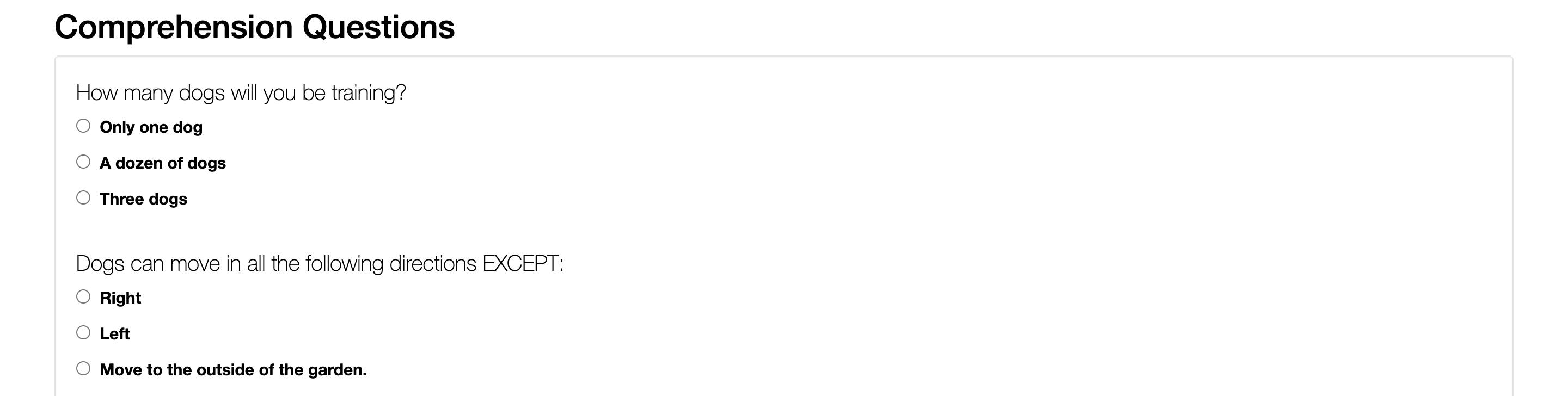}
    \end{subfigure}

\medskip
    \begin{subfigure}{1\textwidth}
        \includegraphics[width=1.1\linewidth]{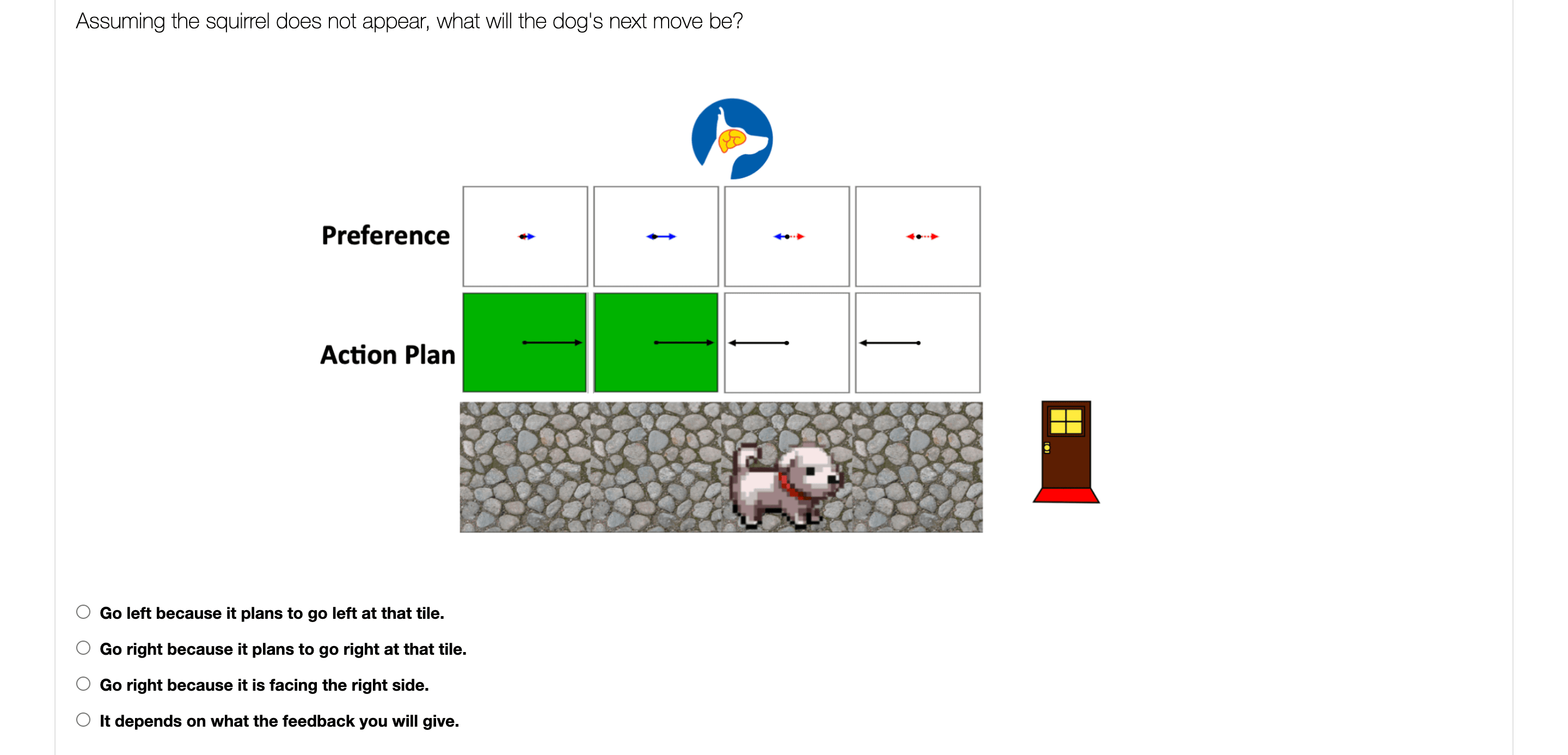}
    \end{subfigure}
    \caption[]{The quiz before taking the dog training task.}
\end{figure}

\begin{figure}[ht]\ContinuedFloat
    \centering
    \begin{subfigure}{1\textwidth}
        \includegraphics[width=1.1\linewidth]{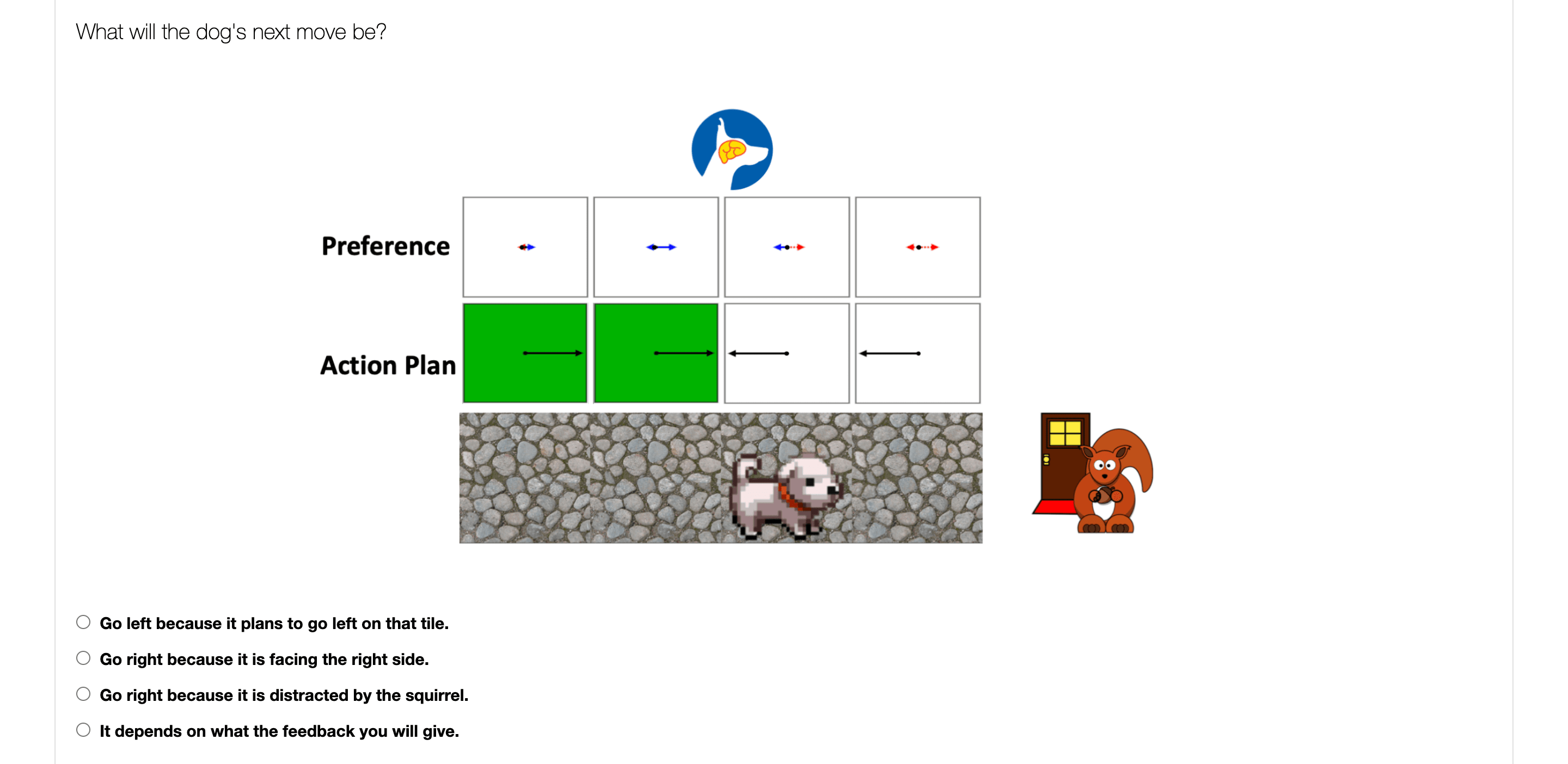}
    \end{subfigure}

\medskip
    \begin{subfigure}{1\textwidth}
        \includegraphics[width=1.1\linewidth]{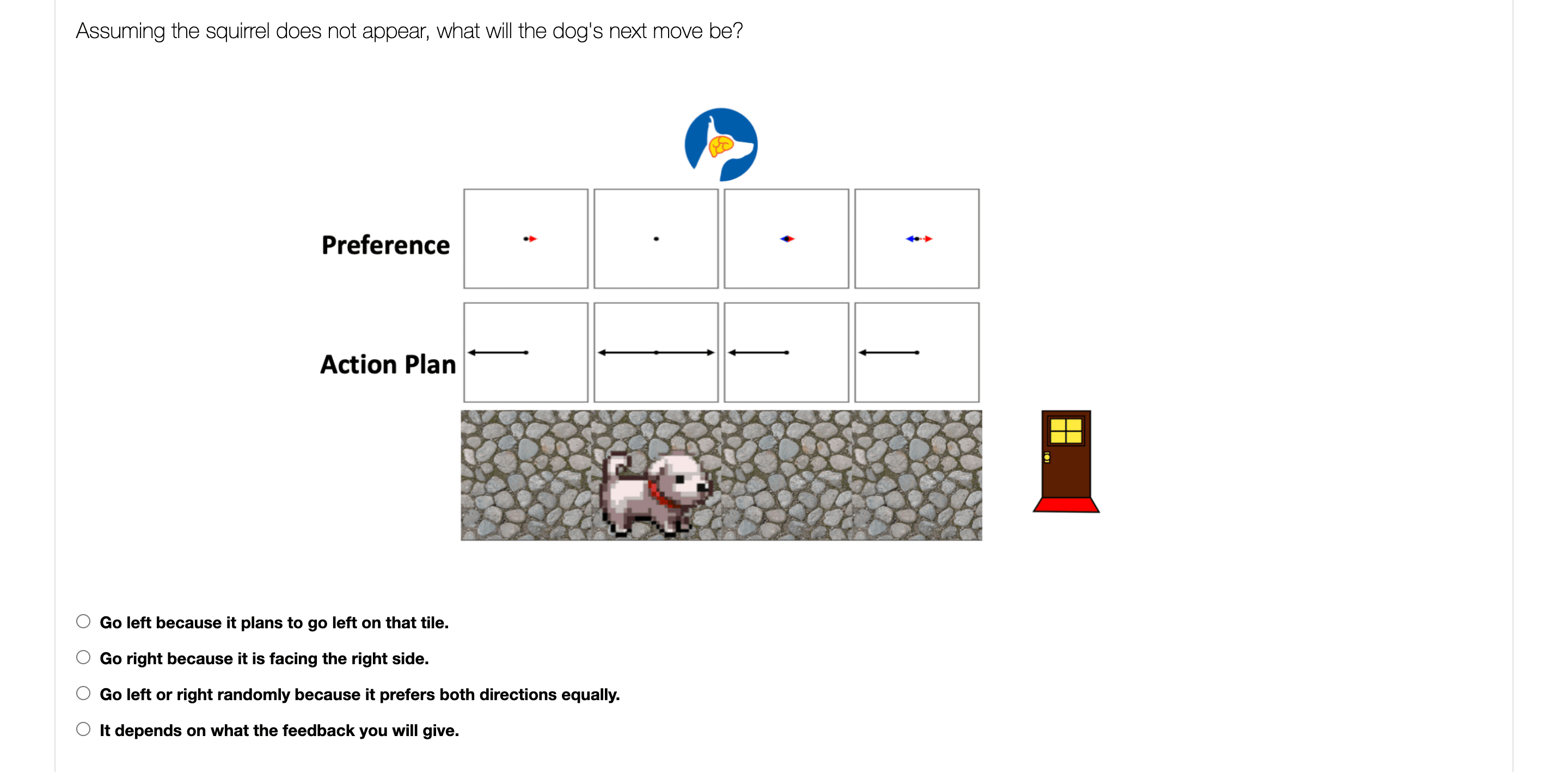}
    \end{subfigure}

    \begin{subfigure}{1\textwidth}
        \includegraphics[width=1.1\linewidth]{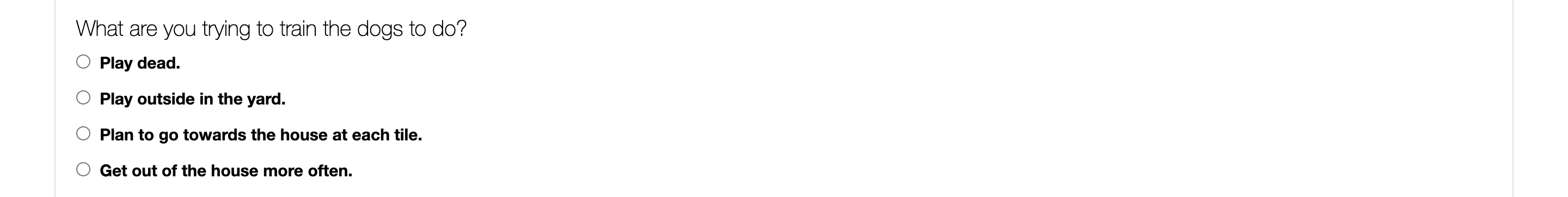}        
    \end{subfigure}    
    \caption[]{The quiz before taking the dog training task (continued).}    
\end{figure}

\begin{figure}[ht]\ContinuedFloat
    \centering
    \begin{subfigure}{1\textwidth}
        \includegraphics[width=1.1\linewidth]{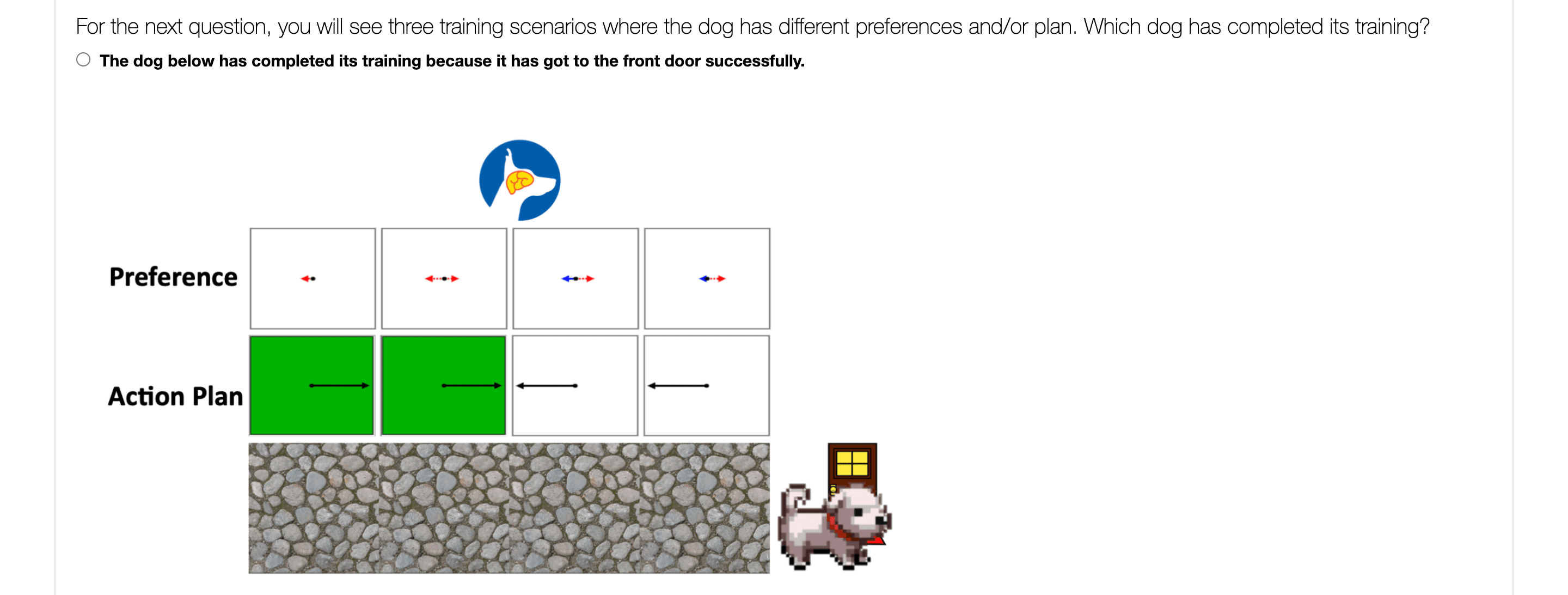}
    \end{subfigure}

\medskip
    \begin{subfigure}{1\textwidth}
        \includegraphics[width=1.1\linewidth]{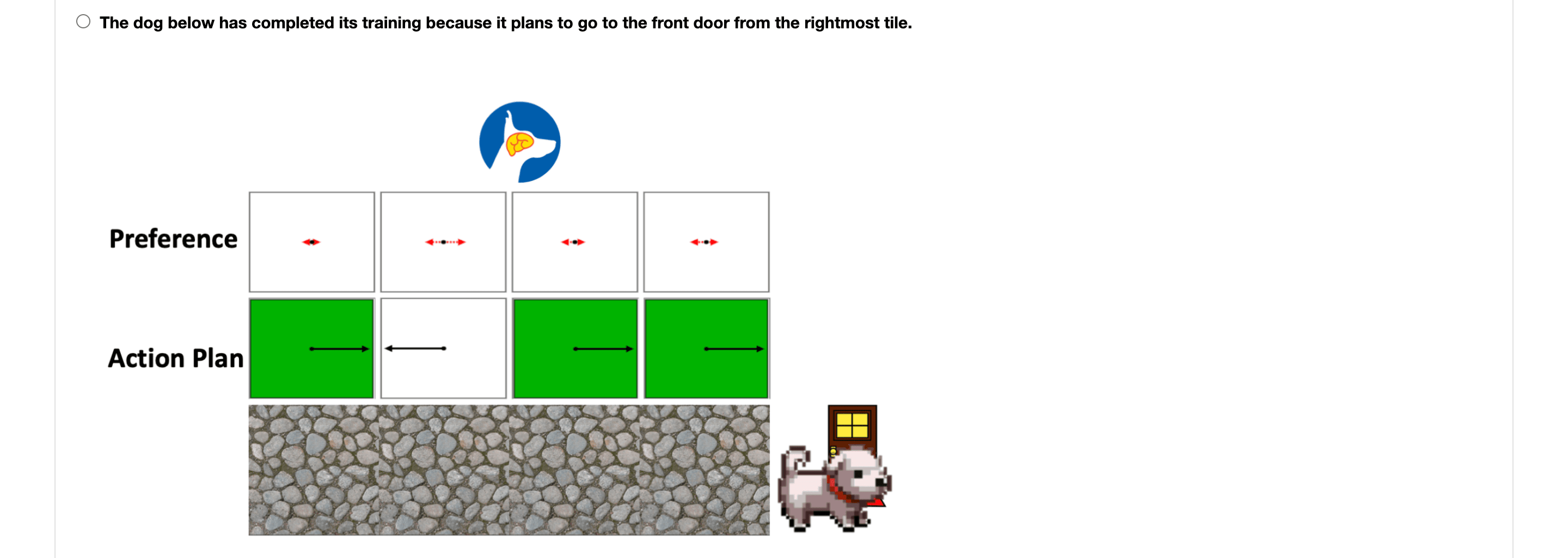}
    \end{subfigure}

    \begin{subfigure}{1\textwidth}
        \includegraphics[width=1.1\linewidth]{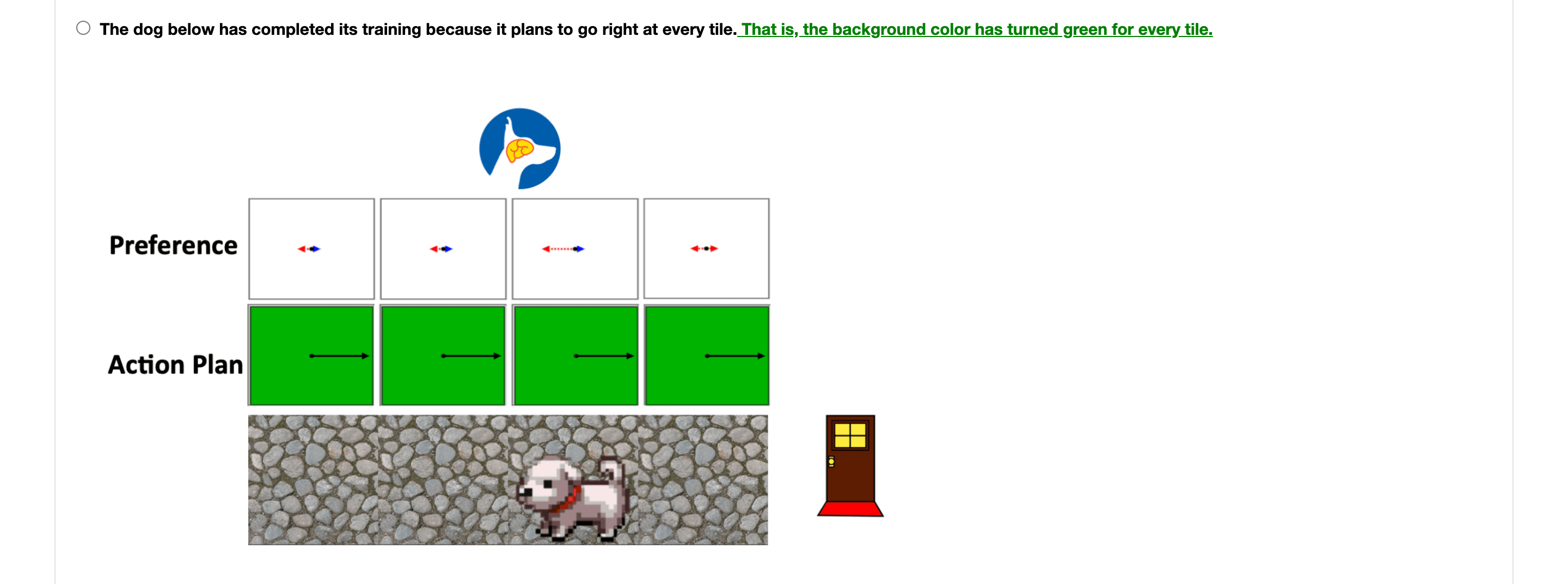}        
    \end{subfigure}    
    \caption[]{The quiz before taking the dog training task (continued).}    
\end{figure}

\begin{figure}[ht]\ContinuedFloat
    \centering    
    \begin{subfigure}{1\textwidth}
        \includegraphics[width=1.1\linewidth]{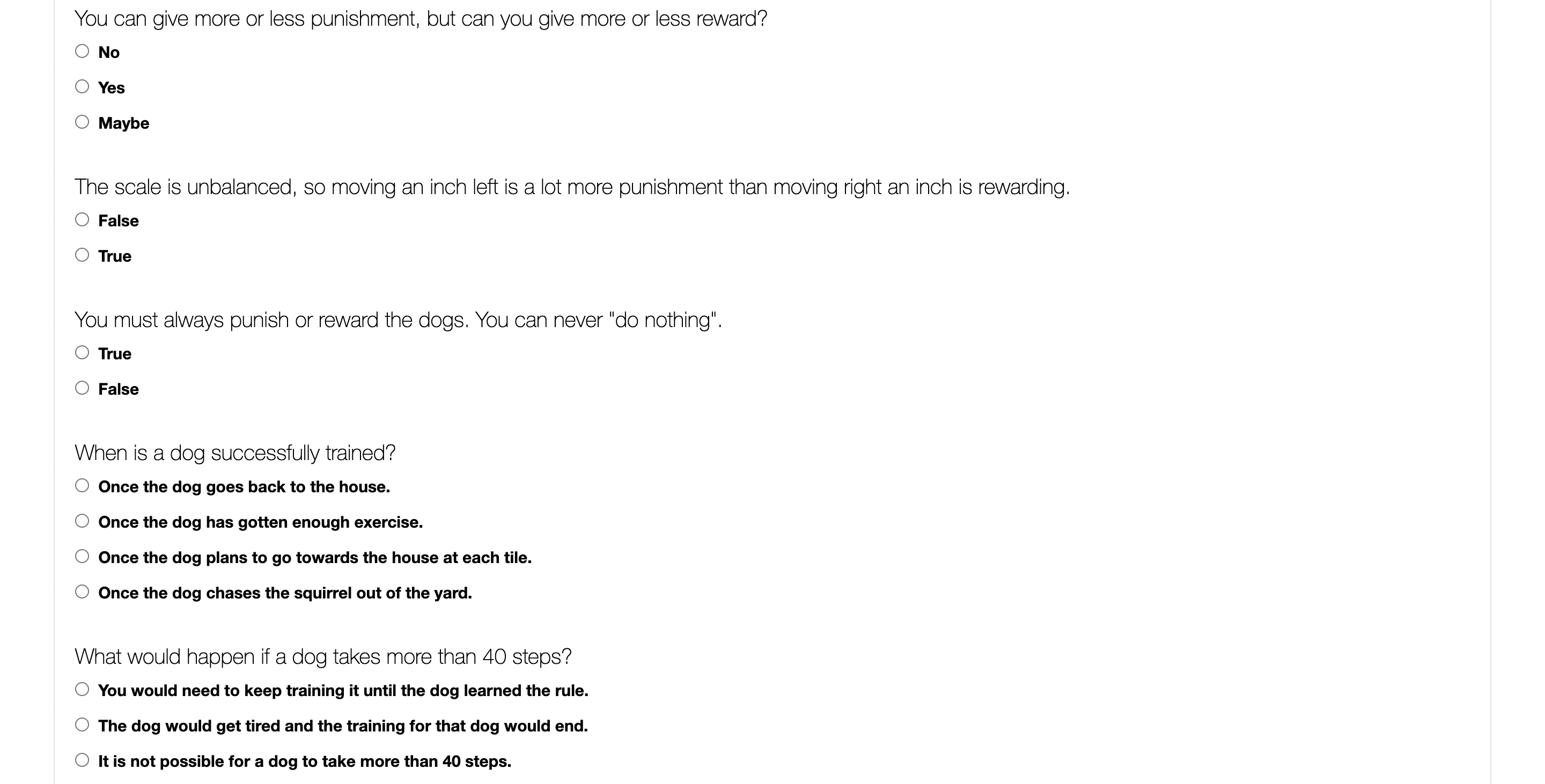}        
    \end{subfigure}        
    \begin{subfigure}{1\textwidth}
        \includegraphics[width=1.1\linewidth]{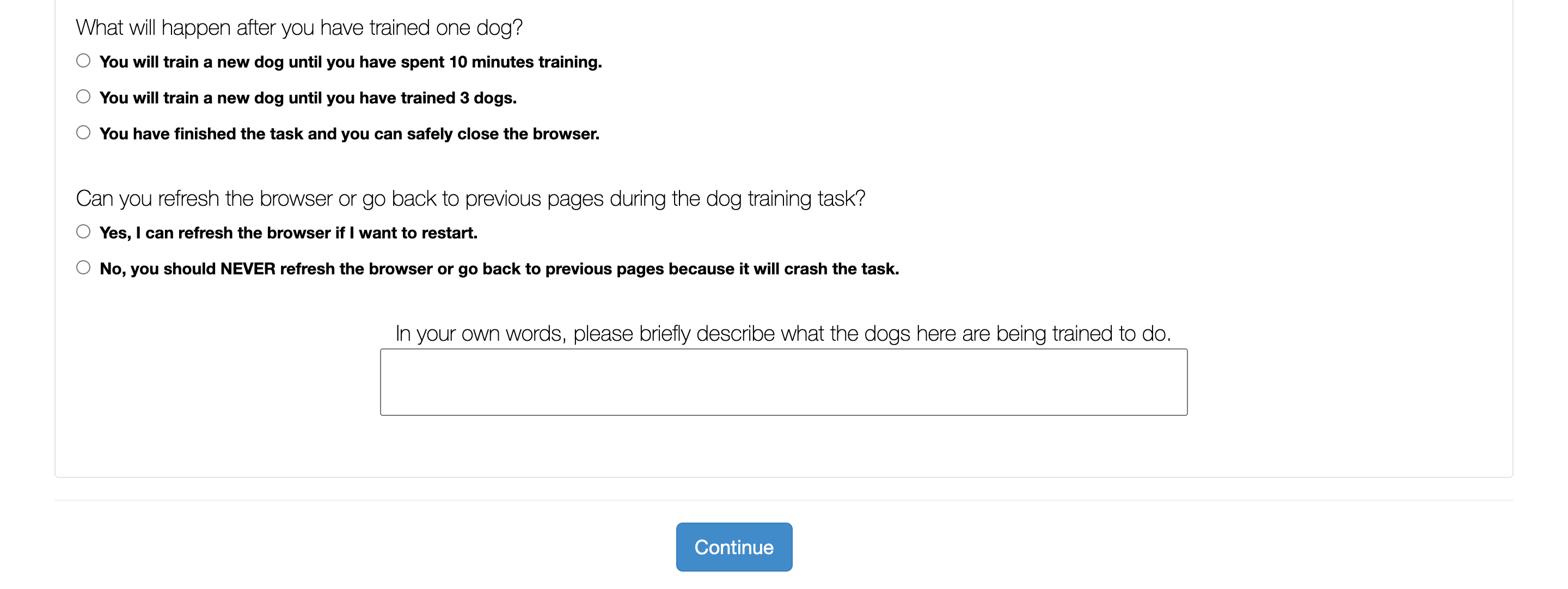}        
    \end{subfigure}          
    \caption[]{The quiz before taking the dog training task (continued).}
    \label{fig:quiz}
\end{figure}

\end{document}